\documentclass[runningheads]{llncs}

\usepackage{eccv}

\usepackage{eccvabbrv}

\usepackage{graphicx}
\usepackage{booktabs}
\usepackage{xcolor}
\usepackage{wrapfig}
\usepackage[accsupp]{axessibility}  

\usepackage[pagebackref,breaklinks,colorlinks,citecolor=eccvblue]{hyperref}
\usepackage{booktabs}
\usepackage{multirow}
\usepackage{graphicx}
\usepackage{fix-cm} 
\usepackage{orcidlink}

\begin{document}

\title{Personalize Your Large Vision-language Models with In-context Prompt Tuning} 

\titlerunning{ICPT}

\author{Yanshu Li\inst{1} \and
Jiaqian Li\inst{1} \and
Kuai Yu\inst{2} \and Xi Xiao\inst{3} \and Dongfang Liu \inst{4} \and Tianyang Wang \inst{3} \and Ruixiang Tang \inst{5}\thanks{Corresponding author.}}

\authorrunning{Y Li, J Li et al.}

\institute{Brown University, USA\\
\email{yanshu\_li1@brown.edu} \and
Columbia University, USA\and
University of Alabama at Birmingham, USA \and
Purdue University, USA \and
Rutgers University, USA\\
\email{ruixiang.tang@rutgers.edu}}

\maketitle

\begin{abstract}
Large vision-language models (LVLMs) have demonstrated strong general multimodal capability and are increasingly deployed in downstream systems. This trend has driven growing interest in LVLM personalization, which aims to enable models to quickly and effectively learn out-of-distribution multimodal concepts to meet user-specific needs. However, many existing methods rely on inference-time training, which reduces efficiency. They also struggle to maintain accuracy in complex multi-image, multi-concept settings. These limitations restrict the broader deployment of LVLM-based systems. Therefore, this paper proposes in-context prompt tuning (ICPT). Specifically, ICPT employs a lightweight projection module capable of operating in complex scenarios to extract fine-grained visual semantics from multiple reference images, seamlessly transforming these features alongside identity-label mappings into continuous prompts. To maximize computational efficiency, this module adaptively determines the prompt length based on the intrinsic visual complexity of each concept. Crucially, to overcome the environmental biases and cross-concept interference prevalent in real-world applications, we introduce two novel geometric regularizations. These constraints refine prompt representations by decoupling key identities from transient environmental states and separating concepts to avoid semantic confusion. Extensive experiments show that ICPT achieves state-of-the-art personalization accuracy across diverse tasks and LVLM backbones. 
  \keywords{Large Vision-language Models \and Personalization}
\end{abstract}
\section{Introduction}
A large vision-language model (LVLM) integrates a vision encoder with a large language model (LLM) backbone and is trained using staged learning procedures \cite{mm}. By extending the capability boundary of LLMs to more real-world scenarios, LVLMs have become a core component in many modern AI systems, including interactive assistants and specialized agents \cite{app3,assis,app4}. 

As these systems, which are built upon general-purpose LVLMs, naturally shift toward private and user-facing deployments, \textit{\textbf{personalization}} emerges as a critical research topic \cite{pers2}. Enabling models to acquire user-specific concepts that go beyond their parametric knowledge is a key step toward fully unlocking the practical value of LVLM-based applications. As illustrated in Fig. \ref{fig:teaser}, by learning the conceptual mapping between a personalized label $<$sks1$>$ and the visual features of a specific individual given by a few reference images, an LVLM can subsequently leverage this concept directly to perform various tasks.
\begin{figure}[tb]
  \centering
  \includegraphics[width=1\linewidth]{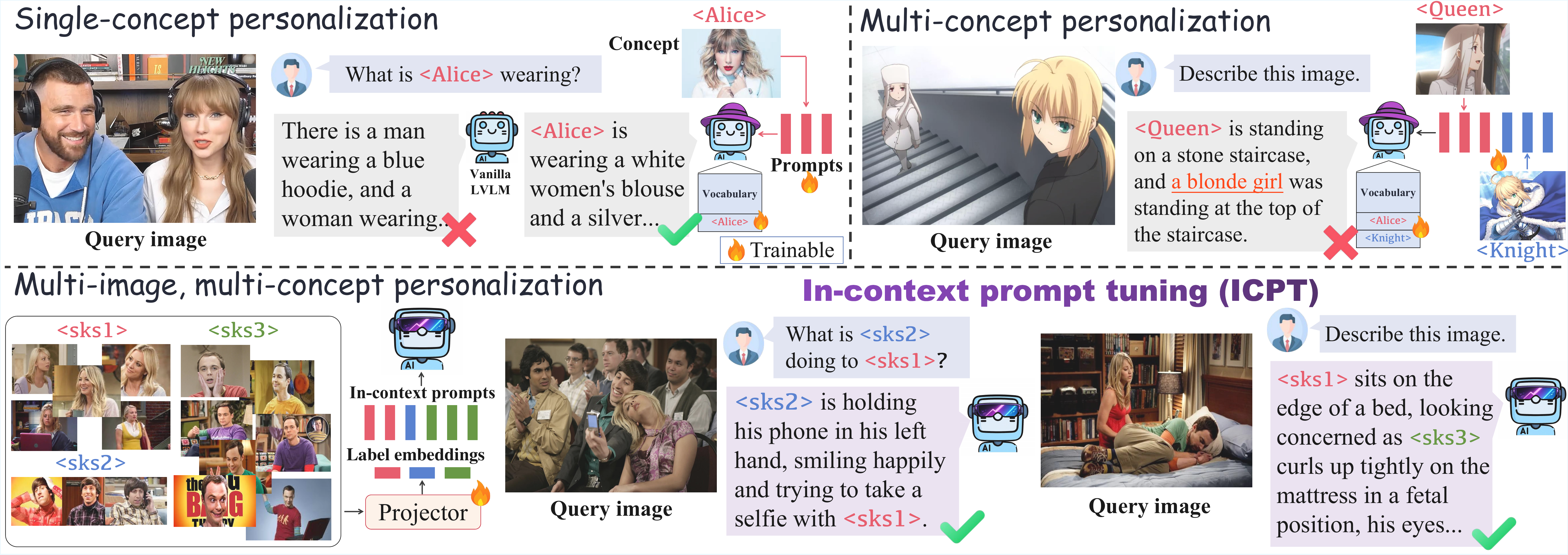}
  \caption{Overview of LVLM personalization. Existing methods often rely on vocabulary expansion with inference-time training and struggle in multi-image and multi-concept settings. Our proposed ICPT improves efficiency and performance in complex scenarios.}
  \label{fig:teaser}
\end{figure}
However, achieving stable and efficient personalization in LVLMs remains challenging, as it requires learning both intra- and cross-modal mappings. A common solution uses in-context learning (ICL) by injecting explicit concepts into the prompt \cite{icl2}. However, LVLMs are highly sensitive to few-shot prompts, which often leads to unstable concept acquisition \cite{icl1}. Moreover, concept images are processed directly as visual inputs, generating many redundant tokens that reduce efficiency \cite{icl3}. To address these challenges, two main lines of work have been explored. One of them focuses on post-training the entire model to improve its ICL capacity, such as PVIT \cite{pvit}. However, the cost of building large-scale training datasets and performing model training further exacerbates efficiency concerns.

In contrast, soft prompt-based personalization becomes a dominant paradigm. Methods such as MyVLM \cite{myvlm} and Yo'LLaVA \cite{yollava} learn a small set of prompts to encode user-specific concepts, enabling lightweight adaptation \cite{mcllava}. However, they reduce practical scalability, as each new concept requires vocabulary expansion and separate training. To avoid test-time training, PLVM \cite{plvm} utilizes an additional pretrained vision encoder to extract features from a reference image and trains a cross-attention module to combine them with the concept label for prompt generation. While offering novel insights, PLVM and recent work are evaluated only on relatively simple tasks and do not address two complex yet crucial scenarios: (1) \textbf{multi-concept} personalization, where the model must simultaneously learn and reason over multiple concepts spanning diverse categories beyond people, such as animals, objects, and scenes. (2) \textbf{multi-image} settings, where each concept is defined by multiple reference images exhibiting varying degrees of visual diversity, often requiring fine-grained and cross-image visual feature extraction. This motivates the following question: \textit{How can we generate prompts that encode multimodal concepts with sufficient clarity for reliable reasoning in these complex settings, while avoiding inference-time training?}

In pursuit of this question, we propose in-context prompt tuning (ICPT), a personalization method that simulates ICL entirely within the representation space of frozen LVLMs. At the core of ICPT is the Adaptive Concept Projector (ACP), which extracts hierarchical multi-scale features directly from the built-in vision encoder to generate a dedicated textual label embedding alongside a continuous visual prompt for each novel concept. To balance representational capacity with inference efficiency, the ACP incorporates a Dynamic Token Router (DTR) that adaptively allocates variable prompt lengths based on the inherent visual complexity of the target concept. To guarantee pure and robust identity acquisition through these prompts, we propose two structural constraints: Contextual Variation Memory (CVM) and Margin-constrained Concept Separation (MCS). CVM maintains a memory queue of environment-induced variations, allowing learned prompts to be projected into a bounded, state-invariant subspace. Meanwhile, MCS applies a dual-modality soft-margin constraint on visual tokens and text anchors to separate identities while preserving shared semantic priors. Optimized end-to-end with carefully curated data and training objectives designed for the proposed modules and constraints, ICPT seamlessly integrates into various LVLMs and enables efficient and robust personalization at inference time. We evaluate ICPT on four common personalization tasks under both single-concept and multi-concept settings. Results across four LVLMs with different sizes and backbones demonstrate the consistently state-of-the-art personalization capability of ICPT compared to ICL and prior methods. Therefore, ICPT is an effective and practical method for future LVLM personalization.

The contributions of this paper are three-fold. \textbf{First}, we propose the Adaptive Concept Projector to extract, aggregate, and transform fine-grained information from multiple reference images into continuous prompts natively usable by LVLMs, introducing a dynamic token routing mechanism to adaptively allocate prompt length. \textbf{Second}, to optimize the semantic representation of these prompts for complex scenarios, we propose two innovative structural constraints alongside a targeted training strategy to optimize the entire framework end-to-end. \textbf{Third}, extensive experiments demonstrate that ICPT achieves superior performance and broad generalization across diverse multimodal tasks and LVLM backbones compared to existing methods. 
\section{Related Works}
\textbf{Large Vision-language Models (LVLMs).} Most mainstream LVLMs consist of three core components: a vision encoder, a projector, and an LLM serving as the decoder \cite{mllm}. Given multimodal inputs containing both images and text, the vision encoder, such as the Vision Transformer (ViT) used in CLIP \cite{clip}, first partitions images into patches and converts them into visual tokens that encode perceptual features. These tokens are then merged and mapped by an MLP-based projector into the same latent space as textual tokens. They are then jointly processed by the LLM for unified multimodal reasoning \cite{mm}. With the progression from LLaVA1.5 \cite{llava} to more recent models like InternVL3.5 \cite{intern3.5}, and Qwen3VL \cite{Qwen3}, LVLMs have demonstrated steadily improving visual perception and understanding capabilities and are increasingly adopted as core components in a wide range of downstream applications \cite{app1,app2}. As a result, enabling effective personalization for LVLMs has become increasingly important \cite{pers}.

\textbf{Personalized LVLMs.} Personalization aims to enable LVLMs to recognize and utilize user-specific concepts, supporting rapid semantic alignment in private and personalized scenarios \cite{pers1}. In-context learning (ICL) is the most commonly used strategy, with methods such as LaMP \cite{lamp} and RAP \cite{rap} leveraging retrieval augmented generation for personalization. However, in multimodal settings, ICL often exhibits instability. PVIT \cite{pvit} addresses this issue by introducing additional post-training to improve the ICL capability of LVLMs. To achieve more lightweight personalization, MyVLM \cite{myvlm} and Yo'LLaVA \cite{yollava} inject user-specific concepts through a small number of learnable soft prompts, avoiding long token-level inputs and improving stability. MC-LLaVA \cite{mcllava} further extends this paradigm to multi-concept scenarios. Nevertheless, these methods require retraining for each new concept due to vocabulary expansion. To reduce this overhead, PLVM \cite{plvm} introduces an auxiliary pre-trained visual encoder to add new concepts without inference-time training, but its applicability is restricted to relatively simple task settings. These limitations motivate us to explore more scalable and flexible personalization methods. Meanwhile, studying training data recipes also emerges as a core topic in LVLM personalization \cite{unic, data1, bench}.

\section{Method}
\label{sec:method}

\subsection{Overview} 
\textbf{Problem setting.} Each set $\mathcal{C} = \{C_1, \ldots, C_n\}$ contains $n$ concepts, where each $C_i$ consists of a label and a collection of reference images of varying sizes that exhibit its visual features. The concepts, namely the label–image mappings, are assumed to constitute novel knowledge for a pre-trained LVLM. Thus, given an LVLM $\theta$, the goal of personalization is to enable the model to learn these concepts from scratch and leverage them across a variety of tasks.  Although ICL aligns well with personalization, explicitly encoding all reference images in the prompt incurs high token costs and unstable performance. 

\begin{figure}[tb]
  \centering
  \includegraphics[width=1\linewidth]{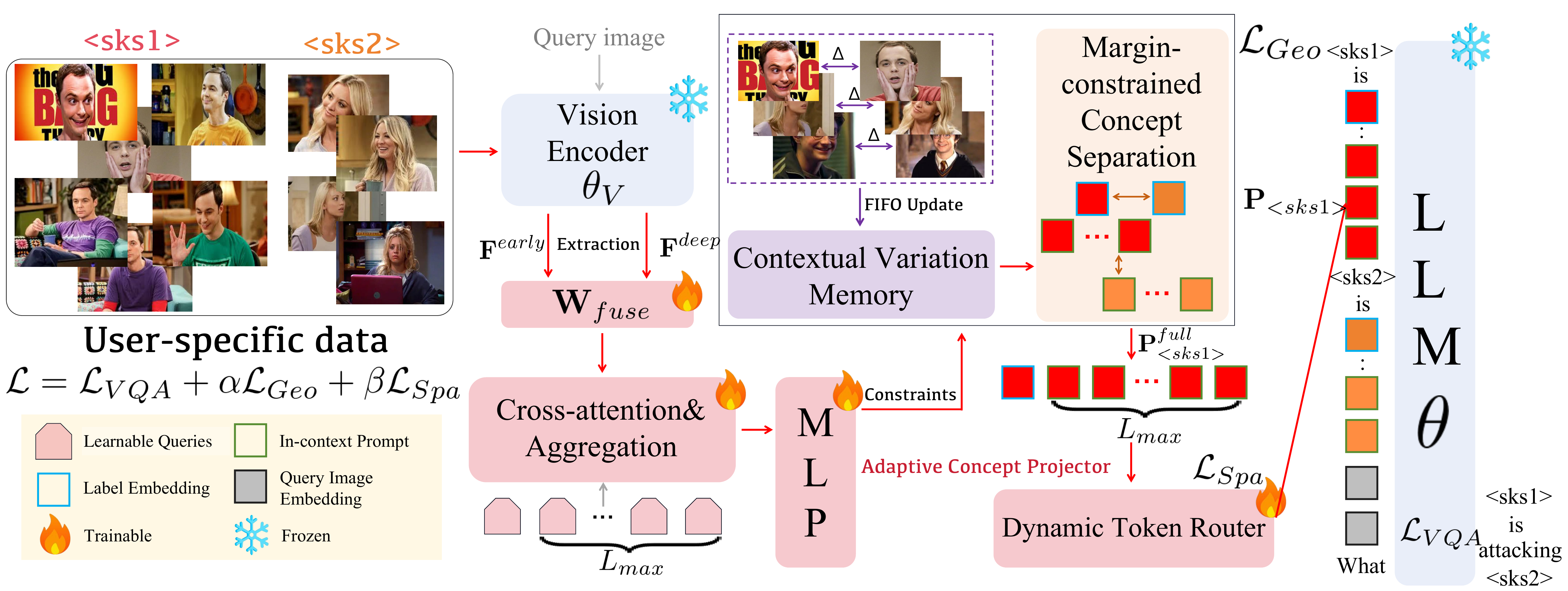}
  \caption{The overall framework of ICPT. The components highlighted in pink constitute the core Adaptive Concept Projector.}
  \label{fig:pipeline}
\end{figure}

Therefore, we propose in-context prompt tuning (ICPT), which simulates ICL in the latent space. Its pipeline is shown in Fig. \ref{fig:pipeline}. We first establish our real-time setting (Sec. \ref{subsec:overview}). Next, we utilize a lightweight Adaptive Concept Projector and its dynamic token allocation mechanism (Sec. \ref{subsec:acp}). We then introduce two strategies to enhance the multimodal semantic capture of in-context prompts: Contextual Variation Memory, which decouples transient environmental states from intrinsic identity (Sec.~\ref{subsec:subspace}), and Margin-constrained Concept Separation, which mitigates cross-concept confusion while preserving shared semantic structure (Sec.~\ref{subsec:margin}). Finally, we describe the training strategy (Sec.~\ref{subsec:training}).

\subsection{Multi-concept In-context Prompts} \label{subsec:overview} 
For each concept $C_i$, an Adaptive Concept Projector (ACP) processes the reference images to generate a continuous in-context prompt $\mathbf{P}_i \in \mathbb{R}^{L_i \times d}$, where $d$ is the hidden dimension of the LVLM. Unlike prior works that force a fixed prompt length for all concepts, the ACP then determines a variable token length $L_i \le L_{max}$ based on the visual complexity of the specific concept. It simultaneously generates a token embedding $\mathbf{w}_i \in \mathbb{R}^{1 \times d}$ to represent the label of each concept. During personalization, to associate a generic label with the visual features of a concept, a two-concept LVLM input can be formatted as follows:
\begin{quote} 
\texttt{[System]: We define} \\ 
\texttt{<sks1> is $\mathbf{w}_1: \mathbf{P}_1$} \\ 
\texttt{<sks2> is $\mathbf{w}_2: \mathbf{P}_2$} \\ 
\texttt{[User]: (Query Image) What is <sks2> doing to <sks1>?} 
\end{quote}

Here, the textual labels (\texttt{<sks1>}, \texttt{<sks2>}) are retained, enabling the frozen LVLM to output them in open-ended responses using its pre-trained vocabulary. $\mathbf{w}_i$ serves as the dedicated semantic anchor for each label, while the subsequent $\mathbf{P}_i$ provides visual evidence. Under this real-time setting, the challenge is ensuring that these in-context prompts and their anchors precisely capture the identities while maintaining disentanglement during multi-concept routing. 

\subsection{Adaptive Concept Projector (ACP)} \label{subsec:acp} In realistic settings, reference images of a concept are often not clean or well-isolated. Instead, they may contain visual distractions such as co-occurring objects and complex backgrounds. To address this issue, existing approaches either employ an auxiliary mask generator to crop concept-specific regions or introduce an additional vision encoder to extract embeddings dedicated to personalization. 

To eliminate such external dependencies while improving efficiency, we directly leverage the built-in vision encoder $\theta_V$ of the LVLM. For each reference image, we feed it into $\theta_V$ and, inspired by \cite{deepstack}, extract visual representations from its early and deep layers. By utilizing these hierarchical embeddings, we capture multi-level visual characteristics of objects within the image, enabling fine-grained concept identification without requiring external supervision. Specifically, let $I_{i,j}$ denote the $j$-th reference image of concept $C_i$, where $j \in \{1, \dots, N_i\}$. We extract its patch-wise embeddings from the designated early and deep layers of $\theta_V$, denoted respectively as $\mathbf{F}_{i,j}^{early}, \mathbf{F}_{i,j}^{deep} \in \mathbb{R}^{S \times d_v}$, where $S$ is the sequence length of the patches and $d_v$ is the feature dimension of the vision encoder. We fuse these multi-scale features via channel-wise concatenation followed by a linear projection weight $\mathbf{W}_{fuse}$: 
\begin{equation} \mathbf{F}_{i,j} = [\mathbf{F}_{i,j}^{early}  \parallel \mathbf{F}_{i,j}^{deep}] \mathbf{W}_{fuse} \in \mathbb{R}^{S \times d}, \end{equation} 
where $\parallel$ denotes the concatenation operator and $\mathbf{W}_{fuse} \in \mathbb{R}^{2d_v \times d}$ projects the fused features into the LVLM's hidden dimension $d$. 

After obtaining the fused visual embeddings for each image, we employ the ACP, which consists of a single-layer cross-attention module followed by an MLP, to transform them into in-context visual prompts and their corresponding textual label embeddings. We initialize a fixed set of $L_{max} + 1$ learnable latent queries $\mathbf{Q} \in \mathbb{R}^{(L_{max} + 1) \times d}$. The ACP distills the dense visual features into a fixed-capacity representation via cross-attention: \begin{equation} \mathbf{v}_{i,j} = \text{Softmax}\left(\frac{(\mathbf{Q}\mathbf{W}_q)(\mathbf{F}_{i,j}\mathbf{W}_k)^\top}{\sqrt{d}}\right) (\mathbf{F}_{i,j}\mathbf{W}_v) \in \mathbb{R}^{(L_{max} + 1) \times d}, \end{equation} 
where $\mathbf{W}_q, \mathbf{W}_k, \mathbf{W}_v \in \mathbb{R}^{d \times d}$ are learnable projection matrices. The visual centroid across the $N_i$ reference images is aggregated as $\bar{\mathbf{v}}_i = \frac{1}{N_i} \sum_{j=1}^{N_i} \mathbf{v}_{i,j}$. This centroid is then processed by the MLP to generate a joint representation $[\mathbf{w}_i ; \mathbf{P}_i^{full}] = \text{MLP}(\bar{\mathbf{v}}_i)$, where the first token $\mathbf{w}_i \in \mathbb{R}^{1 \times d}$ serves as the dedicated word embedding for the text label, and the remaining $L_{max}$ tokens form the intermediate full-capacity visual prompt $\mathbf{P}_i^{full} \in \mathbb{R}^{L_{max} \times d}$. 

As different concepts vary in visual complexity, we aim to allocate prompts dynamically to balance accuracy and efficiency. To this end, the ACP incorporates a Dynamic Token Router (DTR) applied exclusively to the visual prompt. The DTR utilizes a lightweight linear routing head and a Sigmoid activation $\sigma$ to assign a continuous score $s_{i,l}$ to each of the $L_{max}$ visual tokens: 
\begin{equation} s_{i,l} = \sigma(\mathbf{p}_{i,l}^{full} \mathbf{w}_{router} + b_{router}) \in (0, 1), \quad \text{for } l \in \{1, \dots, L_{max}\}, \end{equation} 
where $\mathbf{p}_{i,l}^{full} \in \mathbb{R}^{1 \times d}$ is the $l$-th token of $\mathbf{P}_i^{full}$, $\mathbf{w}_{router} \in \mathbb{R}^{d \times 1}$, and $b_{router}$ is a bias term. During training, to preserve end-to-end differentiability, we apply soft-masking via element-wise multiplication: $\mathbf{P}_i = \mathbf{s}_i \odot \mathbf{P}_i^{full}$, where $\mathbf{s}_i = [s_{i,1}, \dots, s_{i,L_{max}}]^\top$ and $\odot$ denotes row-wise broadcasting multiplication. During inference, we dynamically prune redundant visual tokens by retaining only the $L_i$ tokens satisfying $s_{i,l} > \tau$, guaranteeing $L_i \ge 1$. The discrete label embedding $\mathbf{w}_i$, acting as the global text anchor, is preserved without pruning. This ensures each concept occupies an optimal, minimal footprint in the LVLM context. 

In summary, the ACP defines the prompt-generation process. It takes $C_i$'s hierarchical visual features as input and produces a full-capacity visual prompt $\mathbf{P}_i^{full}$ and an embedding of the concept label $\mathbf{w}_i$. The visual prompt is then dynamically routed into a soft-masked prompt $\mathbf{P}_i$, which is incorporated into the LVLM prompt alongside $\mathbf{w}_i$, introduced in Sec.~\ref{subsec:overview}. However, to ensure the representations capture pure, disentangled concepts, we regularize them during training via two geometric constraints, detailed next. 

\subsection{Contextual Variation Memory} \label{subsec:subspace} Although the generated full-capacity prompt $\mathbf{P}_i^{full}$ encodes the target identity, it also retains biases from environmental factors such as background and lighting. If unaddressed, the LVLM may overfit to these cues rather than the true identity. In multi-concept settings, estimating such noise from intra-concept differences is unreliable because the aggregated centroid is itself biased by visual context.

To isolate and filter this noise without identity leakage, we construct a \textbf{Contextual Variation Memory} (CVM), denoted $\mathcal{M}_{ext}$, dynamically maintained as a First-In-First-Out (FIFO) queue with a fixed capacity limit $K$. During training, we concurrently perform two operations: populating this memory with environmental variations and utilizing it to regularize the prompts of the target concepts. To populate $\mathcal{M}_{ext}$ without introducing significant additional overhead, we reuse the representations already computed during the ACP's forward pass. For each concept $C_i$ in the current batch with $N_i \ge 2$ reference images, we randomly sample one pair of distinct indices $(x, y)$. To extract a sharp environmental transition without the blurring effect of multi-image averaging, we bypass centroid aggregation and independently pass $\mathbf{v}_{i,x}$ and $\mathbf{v}_{i,y}$ through the shared MLP to obtain their single-image full-capacity visual prompts, $\mathbf{P}_{i,x}^{full}$ and $\mathbf{P}_{i,y}^{full}$. In practical personalization scenarios, the provided reference images are typically selected to preserve stable or correlated identity components across different views. As a result, these identity components are expected to largely offset each other in the visual latent space. Thus, their Euclidean difference yields a directional matrix that captures environment-induced variations: 
\begin{equation} \boldsymbol{\Delta}_{i, x, y} = \mathbf{P}_{i,x}^{full} - \mathbf{P}_{i,y}^{full} \in \mathbb{R}^{L_{max} \times d}. \end{equation} 

We then normalize this matrix to obtain $\hat{\boldsymbol{\Delta}} = \boldsymbol{\Delta}_{i, x, y} / \|\boldsymbol{\Delta}_{i, x, y}\|_F$ and push it into $\mathcal{M}_{ext}$. Once the memory queue reaches its capacity limit $K$, the oldest transitions are dequeued as new ones are added. Meanwhile, we utilize this updated memory to regularize the multi-image target concepts being optimized. For the aggregated visual prompt $\mathbf{P}_i^{full}$ of a target concept, we normalize it onto the unit hypersphere ($\hat{\mathbf{P}}_i^{full} = \mathbf{P}_i^{full} / \|\mathbf{P}_i^{full}\|_F$) and explicitly penalize its projection onto the variation matrices currently stored in $\mathcal{M}_{ext}$. Because both operands are Frobenius-normalized, their Frobenius inner product is mathematically equivalent to their cosine similarity. This constraint encourages the prompt to suppress components aligned with dominant contextual variation directions: \begin{equation} \langle \hat{\mathbf{P}}_i^{full}, \hat{\mathbf{d}} \rangle_F = 0, \quad \forall\hat{\mathbf{d}} \in \mathcal{M}_{ext}. \label{eq:ortho_constraint} \end{equation} 

By satisfying this constraint across all discrete matrices in $\mathcal{M}_{ext}$, we geometrically drive the learned prompt into the orthogonal null space of the continuous, identity-agnostic subspace spanned by the memory queue. By equipping the matrix space $\mathbb{R}^{L_{max} \times d}$ with the Frobenius inner product, we formally denote this contextual variation subspace as $\hat{\mathcal{V}}_{ext} = \text{span}(\mathcal{M}_{ext})$. 

This regularization provides an explicit decomposition and bound on state-induced interference. Consider the LVLM evaluating a novel query image at test time. Let $\hat{\mathbf{q}}$ denote the Frobenius-normalized representation of a concept observed under a novel condition, and let $\hat{\mathbf{v}}_{id}$ denote the ideal, identity-dominant reference representation of that same concept. The deviation caused by the novel environment is defined as $\boldsymbol{\delta}_{state} \triangleq \hat{\mathbf{q}} - \hat{\mathbf{v}}_{id}$. We can orthogonally decompose this state-induced shift with respect to our learned subspace $\hat{\mathcal{V}}_{ext}$: \begin{equation} \boldsymbol{\delta}_{state} = \boldsymbol{\delta}_{state}^{\parallel} + \boldsymbol{\delta}_{state}^{\perp}, \quad \boldsymbol{\delta}_{state}^{\parallel} \in \hat{\mathcal{V}}_{ext}, \quad \boldsymbol{\delta}_{state}^{\perp} \perp \hat{\mathcal{V}}_{ext}. \end{equation} 

Because $\hat{\mathcal{V}}_{ext}$ captures a broad spectrum of environmental shifts, $\boldsymbol{\delta}_{state}^{\parallel}$ represents the predictable environmental interference, while $\boldsymbol{\delta}_{state}^{\perp}$ is the unpredictable, out-of-span residual. The following theorem formally proves that enforcing Eq.~\ref{eq:ortho_constraint} during training neutralizes this interference during inference.
\begin{theorem}[State-Induced Interference Bound] \label{thm:stateagnostic} If the learned prompt satisfies the orthogonality constraint $\langle \hat{\mathbf{P}}_i^{full}, \hat{\mathbf{d}} \rangle_F = 0$ for all $\hat{\mathbf{d}} \in \mathcal{M}_{ext}$, then the Frobenius inner product (which is equivalent to cosine similarity for Frobenius-normalized matrices) between the prompt and the novel query decomposes as \begin{equation} \langle \hat{\mathbf{P}}_i^{\,full}, \hat{\mathbf{q}} \rangle_F = \langle \hat{\mathbf{P}}_i^{\,full}, \hat{\mathbf{v}}_{id} \rangle_F + \langle \hat{\mathbf{P}}_i^{\,full}, \boldsymbol{\delta}_{state}^{\perp} \rangle_F, \end{equation} and the contribution of state-induced deviation is bounded by \begin{equation} \left| \langle \hat{\mathbf{P}}_i^{\,full}, \boldsymbol{\delta}_{state}^{\perp} \rangle_F \right| \le \left\| \boldsymbol{\delta}_{state}^{\perp} \right\|_F . \end{equation} \end{theorem}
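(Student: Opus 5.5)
The argument uses only linearity of the Frobenius inner product and the Cauchy--Schwarz inequality, applied in three steps.

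\textbf{Step 1: extend orthogonality from the memory to its span.} The hypothesis gives $\langle \hat{\mathbf{P}}_i^{full}, \hat{\mathbf{d}} \rangle_F = 0$ for every $\hat{\mathbf{d}} \in \mathcal{M}_{ext}$. Every element of $\hat{\mathcal{V}}_{ext} = \text{span}(\mathcal{M}_{ext})$ is a finite linear combination $\sum_k c_k \hat{\mathbf{d}}_k$. By bilinearity, $\hat{\mathbf{P}}_i^{full}$ is therefore orthogonal to all of $\hat{\mathcal{V}}_{ext}$. In particular,
\begin{equation*}
\langle \hat{\mathbf{P}}_i^{full}, \boldsymbol{\delta}_{state}^{\parallel} \rangle_F = 0.
\end{equation*}
The decomposition $\boldsymbol{\delta}_{state} = \boldsymbol{\delta}_{state}^{\parallel} + \boldsymbol{\delta}_{state}^{\perp}$ exists and is unique. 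This holds because $\hat{\mathcal{V}}_{ext}$ is a finite-dimensional subspace (its dimension is at most $K$) of the inner-product space $(\mathbb{R}^{L_{max}\times d}, \langle\cdot,\cdot\rangle_F)$, so the orthogonal projection onto it is well defined.

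\textbf{Step 2: derive the decomposition identity.} Substitute $\hat{\mathbf{q}} = \hat{\mathbf{v}}_{id} + \boldsymbol{\delta}_{state}^{\parallel} + \boldsymbol{\delta}_{state}^{\perp}$ and expand linearly in the second argument:
\begin{equation*}
\langle \hat{\mathbf{P}}_i^{full}, \hat{\mathbf{q}} \rangle_F = \langle \hat{\mathbf{P}}_i^{full}, \hat{\mathbf{v}}_{id} \rangle_F + \langle \hat{\mathbf{P}}_i^{full}, \boldsymbol{\delta}_{state}^{\parallel} \rangle_F + \langle \hat{\mathbf{P}}_i^{full}, \boldsymbol{\delta}_{state}^{\perp} \rangle_F .
\end{equation*}
By Step 1 the middle term vanishes, which gives the claimed identity.

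\textbf{Step 3: bound the residual term.} Apply the Cauchy--Schwarz inequality for the Frobenius inner product:
\begin{equation*}
|\langle \hat{\mathbf{P}}_i^{full}, \boldsymbol{\delta}_{state}^{\perp} \rangle_F| \le \|\hat{\mathbf{P}}_i^{full}\|_F \, \|\boldsymbol{\delta}_{state}^{\perp}\|_F .
\end{equation*}
Since $\hat{\mathbf{P}}_i^{full}$ is Frobenius-normalized, $\|\hat{\mathbf{P}}_i^{full}\|_F = 1$, and the stated bound follows. One could add that $\|\boldsymbol{\delta}_{state}^{\perp}\|_F \le \|\boldsymbol{\delta}_{state}\|_F \le 2$ by Pythagoras and the triangle inequality. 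This sharpening is optional.

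No step is a real obstacle; the only point needing care is a modelling one. Step 1 must make explicit that orthogonality to the finitely many stored matrices implies orthogonality to their span. It must also make explicit that $\hat{\mathbf{q}}$ and $\hat{\mathbf{v}}_{id}$ are taken to live in the same space $\mathbb{R}^{L_{max}\times d}$ as the prompt, so that every inner product is defined. I would state both as standing assumptions at the start of the proof.
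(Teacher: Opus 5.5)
Your proposal is correct and follows essentially the same route as the paper's proof: expand $\langle \hat{\mathbf{P}}_i^{full}, \hat{\mathbf{q}} \rangle_F$ by linearity, drop the $\boldsymbol{\delta}_{state}^{\parallel}$ term by orthogonality, then bound the remaining term by Cauchy--Schwarz using $\|\hat{\mathbf{P}}_i^{full}\|_F = 1$. Your Step 1 also makes explicit, via bilinearity, why orthogonality to the finitely many stored matrices gives orthogonality to all of $\hat{\mathcal{V}}_{ext}$, a step the paper asserts without justification.
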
 
\begin{remark}  Theorem~\ref{thm:stateagnostic}'s proof is provided in the Appendix. It establishes that enforcing orthogonality to the subspace $\hat{\mathcal{V}}_{ext}$ attenuates interference aligned with the stored variation directions. As $\mathcal{M}_{ext}$ is continuously updated with diverse, Frobenius-normalized pairwise transitions during training, it provides a progressively richer empirical reference frame for common environment-induced changes. 
\end{remark} 

\subsection{Margin-constrained Concept Separation} \label{subsec:margin} 
When multiple concepts populate the LVLM's visual dictionary simultaneously, cross-concept interference can manifest in two distinct pathways: \textit{visual-space collision} between the candidate visual prompts ($\mathbf{P}_i^{full}$ and $\mathbf{P}_j^{full}$) during cross-attention, and \textit{textual-space collision} between the generated label embeddings ($\mathbf{w}_i$ and $\mathbf{w}_j$) during self-attention. Enforcing global orthogonality between two concepts by driving their similarity toward zero can induce catastrophic feature loss. For example, if two concepts are both specific dog breeds, forcing their embeddings to be orthogonal harms their shared semantic priors, collapsing the representations out-of-distribution. 

To address this issue across modalities, we introduce \textbf{Margin-constrained Concept Separation} (MCS). Rather than driving the cross-concept Frobenius inner product (which equates to cosine similarity for Frobenius-normalized matrices) to zero, we implement this constraint using a soft margin boundary $m \in (0, 1)$ and enforce it dually on both the continuous visual prompts and the discrete label embeddings: \begin{equation} \langle \hat{\mathbf{P}}_i^{full}, \hat{\mathbf{P}}_j^{full} \rangle_F \le m, \quad \text{and} \quad \langle \hat{\mathbf{w}}_i, \hat{\mathbf{w}}_j \rangle \le m, \quad \forall i \neq j, \end{equation}
where $\hat{\mathbf{w}}_i = \mathbf{w}_i / \|\mathbf{w}_i\|_2$ denotes the $L_2$-normalized textual label embedding, and the vector inner product corresponds to their cosine similarity. 

This bounded projection allows the representations to dynamically preserve shared features up to the similarity threshold $m$, while making unique identity embeddings remain separated enough to prevent confusion in both textual and visual spaces. By imposing these pre-DTR constraints, we ensure that the in-context prompts are less affected by visual noise and cross-concept interference.
\setcounter{footnote}{0}
\subsection{Training}
\label{subsec:training}
  \paragraph{Training Data.} To equip ICPT with robust multi-image, multi-concept personalization capabilities, we construct a dedicated, high-quality training dataset due to the lack of suitable existing resources. We first collect 350 concepts from four sources: three existing training sets (MyVLM, Yo'LLaVA, and MC-LLaVA) and additional concepts curated from public media repositories. Each concept is assigned a generic label that is independent of the LVLM's prior knowledge. The selected concepts cover diverse categories, including people, animals, objects, and scenes. For each concept, we prepare a pool of 10 reference images that provide informative visual cues. These images are either directly collected real-world captures or generated using image editing by Nano Banana2\footnote{https://gemini.google/overview/image-generation/}, allowing us to explicitly introduce controlled variations in appearance, context, and background within each concept. During training, the number of reference images sampled per concept ranges from 1 to 6 and is non-uniformly distributed.

Based on these concepts, we construct diverse query images and user queries. In total, we collect 2000 query images, each containing between 0 and 5 concepts. To ensure robust reasoning and explicitly penalize text-prior hallucinations, the data distribution is carefully balanced: single-concept settings account for 35\% of the data, multi-concept cases comprise 60\%. For each query image, we utilize Gemini3.1-Pro\footnote{https://deepmind.google/models/gemini/pro/} to generate 10 candidate user queries spanning a wide range of multimodal tasks, including existence recognition, multiple-choice visual question answering (VQA), and MSCOCO-style \cite{coco} captioning. Finally, we conduct two rounds of manual filtering to retain the five highest-quality user queries per image for training. Detailed statistics are provided in the Appendix.
\paragraph{Training Objective.} To optimize the ACP while keeping the LVLM backbone frozen, we adopt a three-term multi-task objective:
\begin{equation}
    \mathcal{L} = \mathcal{L}_{VQA} + \alpha \mathcal{L}_{Geo} + \beta \mathcal{L}_{Spa}.
\end{equation}

Here, $\mathcal{L}_{VQA}$ represents the standard cross-entropy loss over the generated answer tokens $y_t$ of length $T$. Given the user query $X_{query}$, query image $I_{query}$, and the set of active concepts $\mathcal{C}$, the VQA loss is formulated as:
\begin{equation}
    \mathcal{L}_{VQA} = - \frac{1}{T} \sum_{t=1}^T \log P(y_t | y_{<t}, X_{query}, I_{query}, \{\mathbf{w}_i, \mathbf{P}_i\}_{i \in \mathcal{C}}).
\end{equation}

Meanwhile, $\mathcal{L}_{Geo}$ geometrically unifies the constraints introduced in Sec.~\ref{subsec:subspace} and Sec.~\ref{subsec:margin} into a learning objective. For a batch containing active concepts $\mathcal{C}$ and a sampled subset of normalized variation matrices $\hat{\mathcal{M}}_{b}$, we define:
\begin{equation}
\small
\resizebox{\textwidth}{!}{$
    \mathcal{L}_{Geo} 
    = \sum_{i \in \mathcal{C}} 
    \left( 
    \frac{1}{|\hat{\mathcal{M}}_{b}|} 
    \sum_{\hat{\boldsymbol{\Delta}} \in \hat{\mathcal{M}}_{b}} 
    \langle \hat{\mathbf{P}}_i^{full}, \hat{\boldsymbol{\Delta}} \rangle_F^2 
    + 
    \frac{1}{|\mathcal{C}|-1} 
    \sum_{\substack{j \in \mathcal{C} \\ j \neq i}} 
    \left[ \max\left(0, \langle \hat{\mathbf{P}}_i^{full}, \hat{\mathbf{P}}_j^{full} \rangle_F - m \right)^2 + \max\left(0, \langle \hat{\mathbf{w}}_i, \hat{\mathbf{w}}_j \rangle - m \right)^2 \right] 
    \right).
$}
\end{equation}

Finally, $\mathcal{L}_{Spa}$ applies an $L_1$ penalty to the DTR's predicted scores, actively encouraging the ACP to drop redundant visual tokens and minimize the expected length of in-context prompts:
\begin{equation}
    \mathcal{L}_{Spa} = \frac{1}{|\mathcal{C}| \cdot L_{max}} \sum_{i \in \mathcal{C}} \sum_{l=1}^{L_{max}} s_{i,l}.
\end{equation}

\section{Experiment}
\subsection{Experimental Setup}
\label{sub:4.1}
\paragraph{Evaluation.} Existing LVLM personalization benchmarks generally fail to satisfy the two evaluation requirements of our setting: support for multi-image, multi-concept personalization, and an out-of-distribution test set with respect to the training data. Therefore, to maintain consistency with existing protocols, we adapt data from prior benchmarks and follow their construction principles to enable effective evaluation. We collect concepts from MyVLM, Yo'LLaVA, MC-LLaVA, MMPB \cite{bench}, and additional concepts curated and annotated from public media, resulting in a total of 200 concepts. All selected concepts are verified to be absent from the training set. For each concept, we obtain 1 to 6 reference images through collection or synthesis, with the number of images uniformly distributed across concepts. At test time, all reference images for each concept are used for personalization. These concepts correspond to 300 collected query images, each containing between 0 and 6 concepts, where 6 represents an out-of-distribution setting. We then construct 10 high-quality user queries for each query image by combining existing annotations with those generated by Gemini3.1-Pro. These queries cover the three task types used in training and a novel open-ended VQA task. We report performance separately for each task type under single-concept and multi-concept settings, comprising 1050 and 2100 cases, respectively. Among them, 150 cases correspond to open-ended VQA without any query image, where the LVLM answers solely based on the user query. 

\paragraph{Models and baselines.} We conduct experiments on LLaVA-NeXT-7B \cite{llavanext}, LLaVA-NeXT-34B, InternVL3-8B \cite{intern}, and Qwen3VL-8B \cite{Qwen3}. Results on LLaVA-NeXT-7B serve as the primary comparison. We benchmark against multiple personalization baselines, including ICL, MyVLM \cite{myvlm}, Yo'LLaVA \cite{yollava}, PVIT \cite{pvit}, RAP \cite{rap}, PeKit \cite{pekit}, PLVM \cite{plvm}, and MC-LLaVA \cite{mcllava}. We also include GPT-4o with ICL as a closed-source baseline. On the other three LVLMs, we compare ICPT with ICL and PLVM to demonstrate its generalization across different LVLM backbones. For closed-source baselines, we reproduce their pipelines to the extent possible and report the best scores. For baselines originally designed only for single-concept personalization, we first report their results under the original setting. We then adapt them to the multi-concept settings strictly following their official implementations and report their best scores. If a method cannot be properly adapted, we mark its entries as ``-''. For methods that expand LVLM's vocabulary, we retrain them using \textit{in-distribution} data to ensure a fair comparison. 

\paragraph{Implementation details.} The MLP used in the ACP has three linear layers, where the hidden dimension is set to twice the embedding dimension of the LVLM, and GeLU is used as the activation function. During training, only the parameters of the ACP and its DTR module are updated, while the LVLM backbone remains frozen. We use AdamW as the optimizer with a learning rate of 1e-4 and a batch size of 8. Across different LVLMs, we fix the full capacity of each in-context prompt to $L_{\text{max}} = 20$ and the capacity limit of the CVM to $K = 128$, set the DTR decision threshold to $\tau = 0.5$, and use a soft margin of $m = 0.15$. For visual feature extraction, we consistently take representations from the third layer and the third-to-last layer of the LVLM vision encoder. The coefficients $\alpha$ and $\beta$ are selected in a model-specific manner. For example, on LLaVA-NeXT-7B, we set $\alpha = 0.3$ and $\beta = 0.5$. All experiments are conducted on two NVIDIA H200 GPUs. All additional details of Sec.~\ref{sub:4.1} are provided in the Appendix.
\subsection{Main Results}
\begin{table*}[t]
\centering
\caption{Performance across four personalization task types under two settings, and their weighted average, on LLaVA-NeXT-7B. For existence recognition, we report recall; for multiple-choice VQA (MVQA), accuracy; for open-ended VQA (OVQA), BLEU; and for captioning, we report the average of concept recall in the generated captions and their embedding similarity to the ground-truth captions using DeBERTa-v3-large \cite{deberta}.} 
\small
\resizebox{\textwidth}{!}{
\begin{tabular}{c||ccc|ccc|ccc|ccc}
\toprule
\multirow{2}{*}{\textbf{Method}}
& \multicolumn{3}{c|}{\textbf{Recognition}}
& \multicolumn{3}{c|}{\textbf{MVQA}}
& \multicolumn{3}{c|}{\textbf{OVQA}}
& \multicolumn{3}{c}{\textbf{Captioning}} \\
\cmidrule(lr){2-4}
\cmidrule(lr){5-7}
\cmidrule(lr){8-10}
\cmidrule(lr){11-13}
& Single & Multi & Weight
& Single & Multi & Weight
& Single & Multi & Weight
& Single & Multi & Weight \\
\midrule
GPT-4o + ICL          &0.726&  0.653& 0.702 & 0.695& 0.651 & 0.661 &0.663 & 0.605& 0.619 & 0.674 & 0.608 & 0.630 \\
\midrule
Vanilla                  &0.500&  0.500&0.500 & 0.565 & 0.370 & 0.413 & 0.367 & 0.325 & 0.335 & 0.208 & 0.159 & 0.175\\
ICL                   &0.632&  0.560&0.608 & 0.705 & 0.609 & 0.630 &0.572  & 0.483 & 0.504 &  0.289& 0.137 & 0.188 \\
MyVLM               & 0.718& - &- &0.645  & - & - & 0.548 &  -& - &  0.493&  -&  -\\
Yo’ LLaVA    &0.709& 0.640&0.686  &0.680  & 0.597 & 0.615 & 0.593 & 0.451 & 0.485 & 0.506 & 0.385 & 0.425 \\
PVIT                 & 0.752& 0.633& 0.712 & 0.695 & 0.647 & 0.658 & 0.642 & 0.537 &0.562  & 0.623 &  0.467&0.519  \\
RAP                   &0.725&  0.621& 0.690 &  0.710& 0.627 & 0.645 & 0.651 &  0.529& 0.558 &  0.580&  0.493&  0.522\\
PeKit                 &0.684&  0.647& 0.672 &0.700  & 0.657 & 0.667 &0.680  &  0.524& 0.561 & 0.658 & 0.486 & 0.543 \\
PLVM               &0.794 & 0.719 & 0.769 & 0.760 & 0.673 & 0.692 &  0.695& 0.607 & 0.628 & 0.664& 0.507&  0.559\\
MC-LLaVA             & 0.824& 0.785 & 0.811 &  0.815&  0.709& 0.733& 0.689 & 0.630 & 0.644 & 0.686 &  0.538& 0.587\\
\textbf{ICPT (Ours)}  &\textbf{0.873}&  \textbf{0.859}&  \textbf{0.868}& \textbf{0.850} & \textbf{0.774} &\textbf{0.791}  &  \textbf{0.716}& \textbf{0.697} & \textbf{0.702} & \textbf{0.713} & \textbf{0.624} & \textbf{0.654} \\
\bottomrule
\end{tabular}
}
\label{tab:1}
\end{table*}
\textbf{ICPT demonstrates strong performance across different personalization tasks and settings}. As summarized in Table \ref{tab:1}, ICPT achieves state-of-the-art results across all settings for the four task categories, outperforming existing personalization baselines and GPT-4o with ICL. Compared with directly inserting concepts through ICL, ICPT improves both efficiency and effectiveness while using fewer tokens.
On the existence recognition task, ICPT surpasses the prior SOTA baseline MC-LLaVA by 0.057 in weighted score, while using only \textbf{13.6} tokens per concept on average, compared with the fixed 16 tokens used by MC-LLaVA. ICPT also shows larger gains in more complex scenarios. For example, on open-ended VQA, it improves over MC-LLaVA by 0.027 in the single-image setting and by 0.067 in the multi-image setting.
These results confirm that our design effectively addresses the challenges of multi-image and multi-concept personalization. In addition, the strong performance on open-ended VQA and captioning indicates that introducing in-context prompts does not impair the generative ability of the LVLM, allowing ICPT to integrate naturally into existing LVLM-based systems. Qualitative examples are shown in Fig.~\ref{fig:exa}.

\textbf{ICPT exhibits promising generalization across different LVLM sizes and backbones}. Table \ref{tab:2} summarizes the performance of ICPT on three additional LVLMs, where it consistently achieves SOTA results across all settings. Results on LLaVA-NeXT-7B and LLaVA-NeXT-34B show that ICPT adapts well across model scales when $L_{\text{max}}$ is properly configured. ICPT also performs strongly on InternVL3 and Qwen3VL, two recent LVLM architectures, demonstrating its robustness across different backbones. We further observe that ICPT continues to improve personalization performance even when the underlying LVLM already has strong ICL ability, which PLVM fails to maintain. On the two latest LVLMs, InternVL3 and Qwen3VL, PLVM performs worse than direct ICL on several tasks, whereas ICPT consistently provides stable gains. These results indicate that ICPT can effectively leverage advances in foundation LVLMs and highlight its practical value for building next-generation applications.

\begin{table*}[t]
\centering
\caption{
Performance across four personalization task types under two settings when the personalization methods are applied to three additional LVLM backbones.
}
\small
\resizebox{\textwidth}{!}{
\begin{tabular}{c|c||ccc|ccc|ccc|ccc}
\toprule
\multirow{2}{*}{\textbf{LVLM}} & 
\multirow{2}{*}{\textbf{Method}}
& \multicolumn{3}{c|}{\textbf{Recognition}}
& \multicolumn{3}{c|}{\textbf{MVQA}}
& \multicolumn{3}{c|}{\textbf{OVQA}}
& \multicolumn{3}{c}{\textbf{Captioning}} \\
\cmidrule(lr){3-5}
\cmidrule(lr){6-8}
\cmidrule(lr){9-11}
\cmidrule(lr){12-14}
& 
& Single & Multi & Weight
& Single & Multi & Weight
& Single & Multi & Weight
& Single & Multi & Weight \\
\midrule

\multirow{4}{*}{LLaVA-NeXT-34B}
& Vanilla  &  0.500& 0.500 &  0.500&0.585&0.407 &0.447& 0.392 & 0.316 & 0.334 &0.231 & 0.147 & 0.175 \\
& ICL      & 0.695 & 0.584 & 0.658 & 0.715 & 0.646 & 0.661 & 0.636 &0.573  & 0.588 & 0.328 &  0.215& 0.253 \\
& PLVM     & 0.812 & 0.721 & 0.782 & 0.780 & 0.682 & 0.704 & 0.723 & 0.668 & 0.681 & 0.740 & 0.545 & 0.610 \\
& \textbf{ICPT} & \textbf{0.913} & \textbf{0.885} & \textbf{0.904} & \textbf{0.875} &\textbf{0.807}  & \textbf{0.822} & \textbf{0.746} & \textbf{0.715} & \textbf{0.722} & \textbf{0.762}&  \textbf{0.614}& \textbf{0.663} \\
\midrule

\multirow{4}{*}{InternVL3-8B}
& Vanilla  & 0.500 & 0.500 & 0.500 & 0.635 & 0.507& 0.535 & 0.386 &0.379 & 0.381 & 0.235 &0.263& 0.254 \\
& ICL      & 0.738 & 0.705 & 0.727 & 0.825 & 0.702 &0.729  & 0.726 & 0.618 & 0.644 & 0.575 & 0.538 & 0.550 \\
& PLVM     & 0.795 & 0.730 & 0.773 & 0.820 & 0.695 & 0.723&  0.734& 0.597 & 0.630 & 0.806 & 0.762 & 0.777 \\
& \textbf{ICPT} & \textbf{0.945} & \textbf{0.906} & \textbf{0.932} & \textbf{0.905} &\textbf{0.893} & \textbf{0.896} & \textbf{0.740} & \textbf{0.706} & \textbf{0.714} & \textbf{0.828} & \textbf{0.804} &\textbf{0.812}  \\
\midrule

\multirow{4}{*}{Qwen3VL-8B}
& Vanilla  & 0.500 & 0.500 & 0.500 &0.600&0.604&0.603 & 0.403 & 0.364 & 0.373 & 0.306 & 0.257 &  0.273\\
& ICL      &  0.775& 0.743 & 0.764 & 0.865 & 0.826 & 0.835 &0.758  & 0.642 & 0.670 & 0.635 &  0.572& 0.593\\
& PLVM     & 0.820 & 0.735 & 0.792 & 0.825 & 0.799 & 0.805 &  0.760& 0.668 & 0.690 &  0.784& 0.708 & 0.733 \\
& \textbf{ICPT} & \textbf{0.963} & \textbf{0.924} &\textbf{0.950}  & \textbf{0.920} &\textbf{0.867}  & \textbf{0.879} &  \textbf{0.776}& \textbf{0.724} & \textbf{0.736} & \textbf{0.835} & \textbf{0.821} & \textbf{0.826} \\
\bottomrule
\end{tabular}}
\label{tab:2}
\end{table*}

\begin{wraptable}{r}{0.60\textwidth}
\centering
\small
\caption{ICPT's performance on four standard multimodal personalization benchmarks.}
\label{tb1}
\begin{tabular}{ccccc}
\hline
& Yo'LLaVA & MyVLM & MC-LLaVA & MMPB \\
\hline
ICL  & 0.805 & 0.640 & 0.493 & 0.589 \\
MC-L & 0.921 & 0.964 & 0.904 & 0.824 \\
PLVM & 0.875 & 0.905 & 0.857 & 0.814 \\
ICPT & \textbf{0.926} & \textbf{0.972} & \textbf{0.915} & \textbf{0.853} \\
\hline
\end{tabular}
\end{wraptable}
Beyond the above multi-image, multi-concept evaluation, we further evaluate ICPT on four standard public benchmarks to validate its effectiveness. Table~\ref{tb1} reports the average results across models on the YoLLaVA, MyVLM, MC-LLaVA, and MMPB. For the first three benchmarks, all concepts appearing in the ICPT training set are excluded from evaluation to ensure a fair comparison, as these benchmarks assume vocabulary expansion-based personalization. ICPT consistently achieves the best performance across all four benchmarks, demonstrating that its superiority extends beyond our customized evaluation setting.

\begin{figure}[t]
  \centering
  \includegraphics[width=1\linewidth]{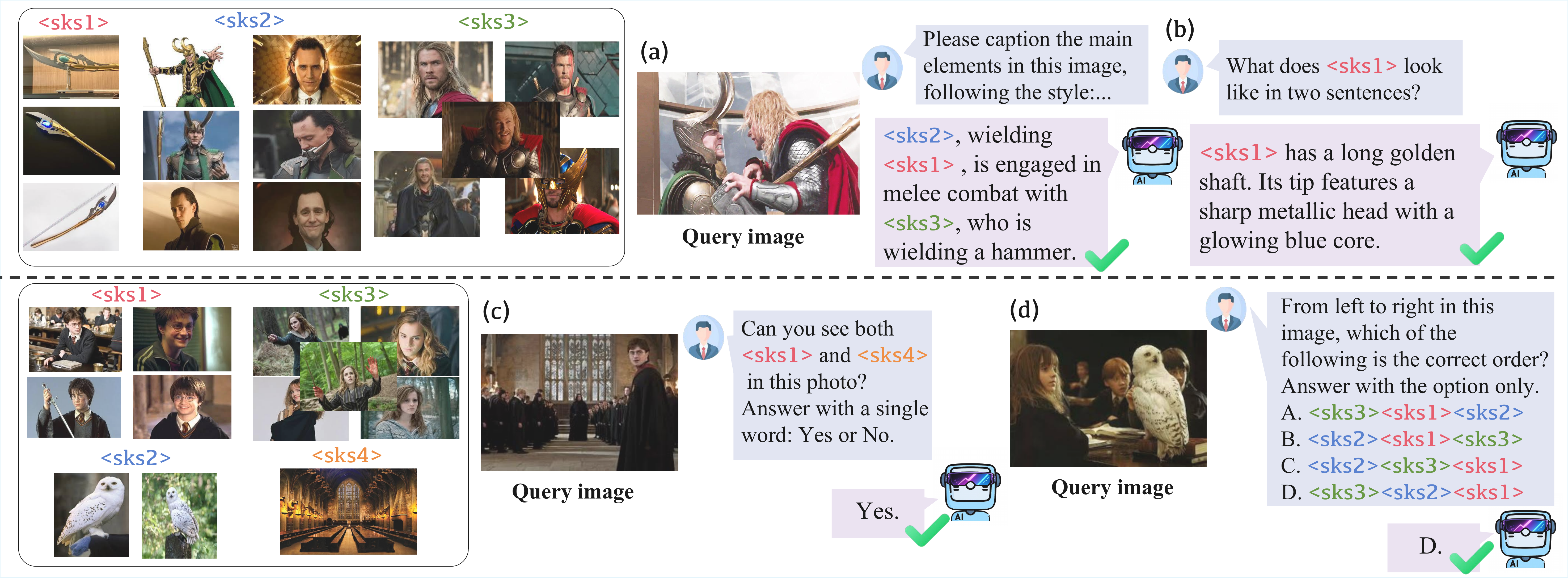}
  \caption{Qualitative examples of ICPT on four tasks: (a) captioning, (b) open-ended VQA without a query image, (c) existence recognition, and (d) multiple VQA. More examples and failure case analyses are provided in the Appendix.}
  \label{fig:exa}
\end{figure}

\begin{figure}[t]
\centering
\begin{minipage}{0.32\textwidth}
\centering
\includegraphics[width=\linewidth]{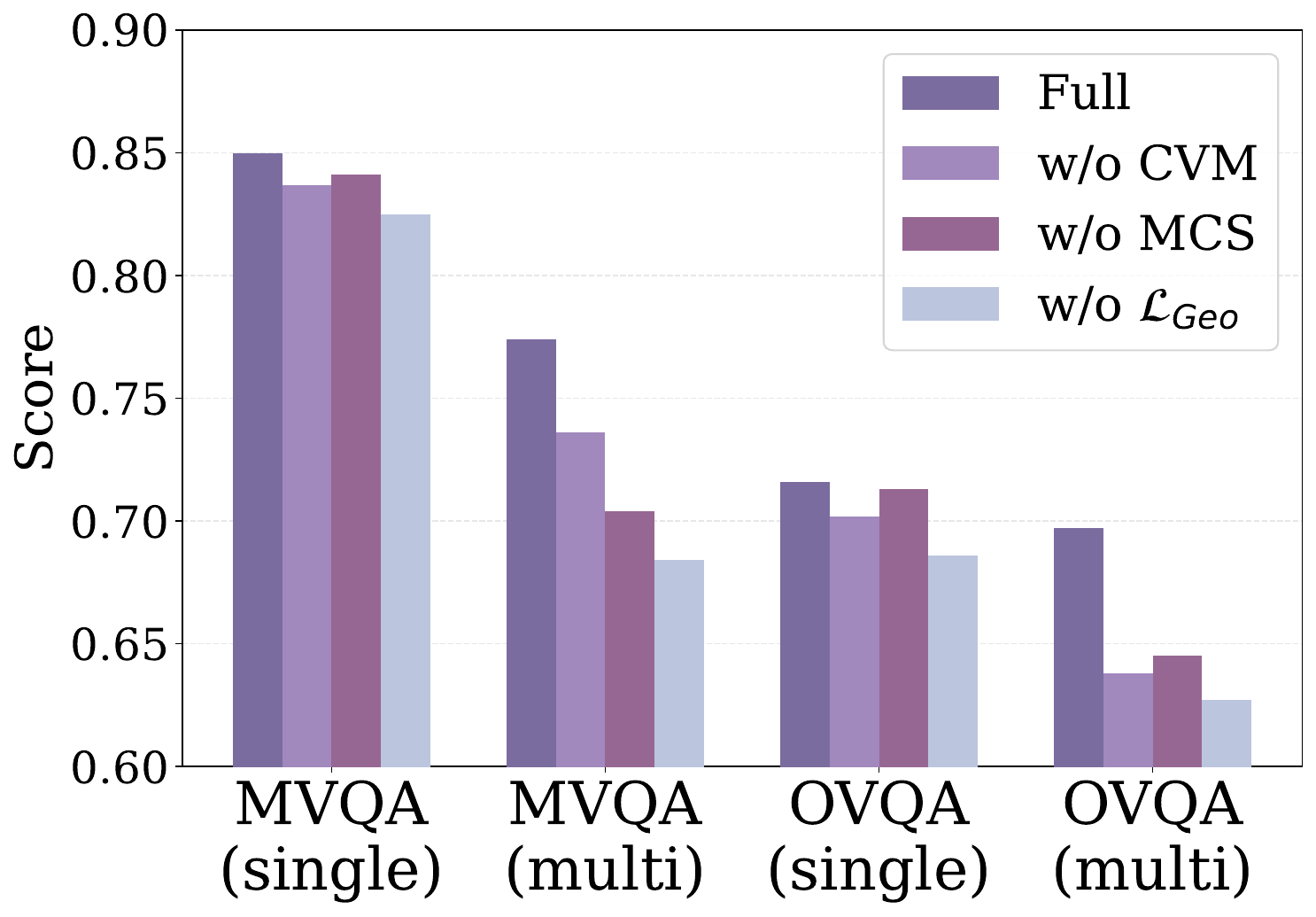}
\captionof{figure}{Ablation study on the proposed constraints.}
\label{fig:fig4}
\end{minipage}
\hfill
\begin{minipage}{0.32\textwidth}
\centering
\includegraphics[width=\linewidth]{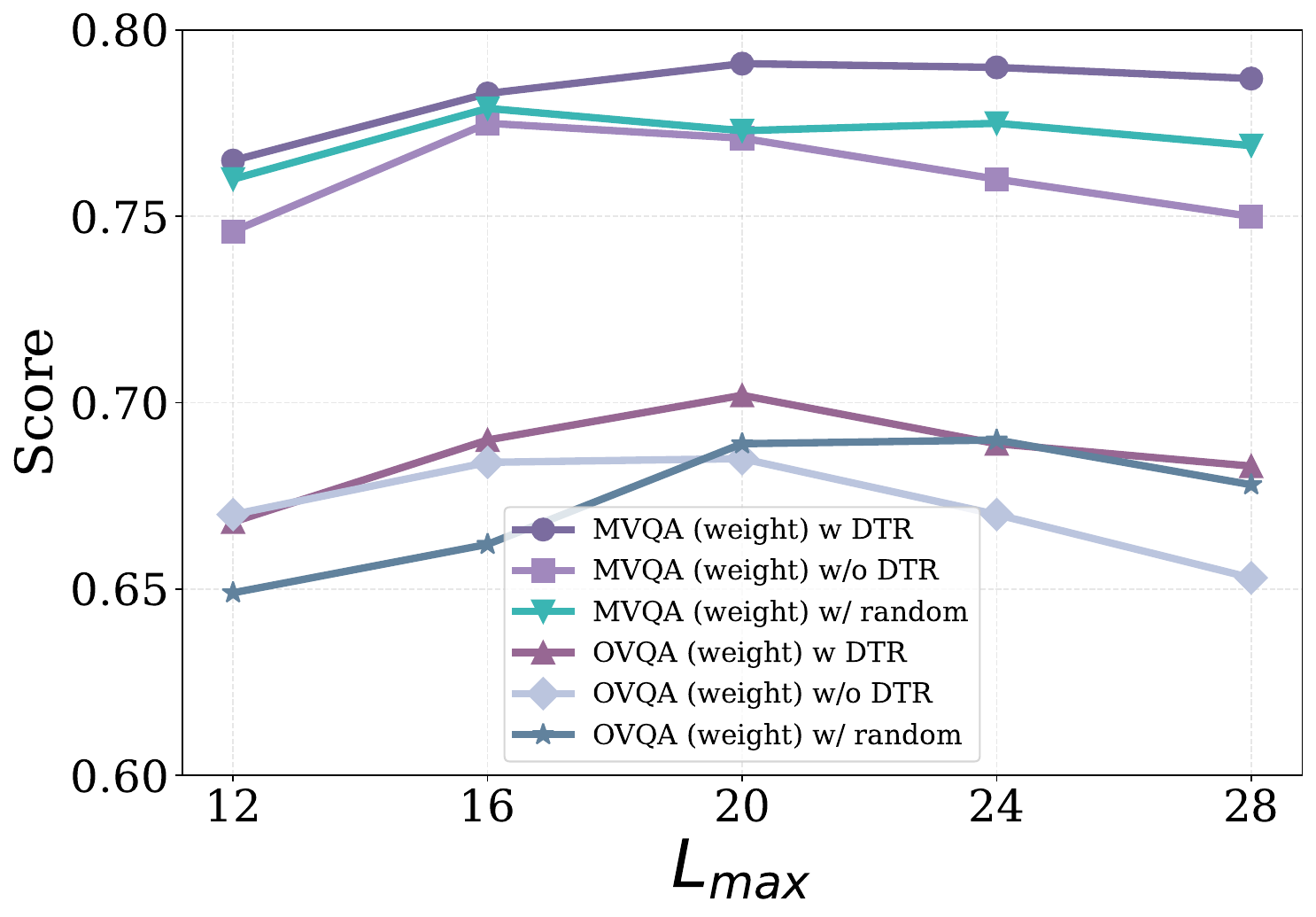}
\captionof{figure}{Ablation study on the DTR mechanism.}
\label{fig:fig5}
\end{minipage}
\hfill
\begin{minipage}{0.32\textwidth}
\centering
\includegraphics[width=\linewidth]{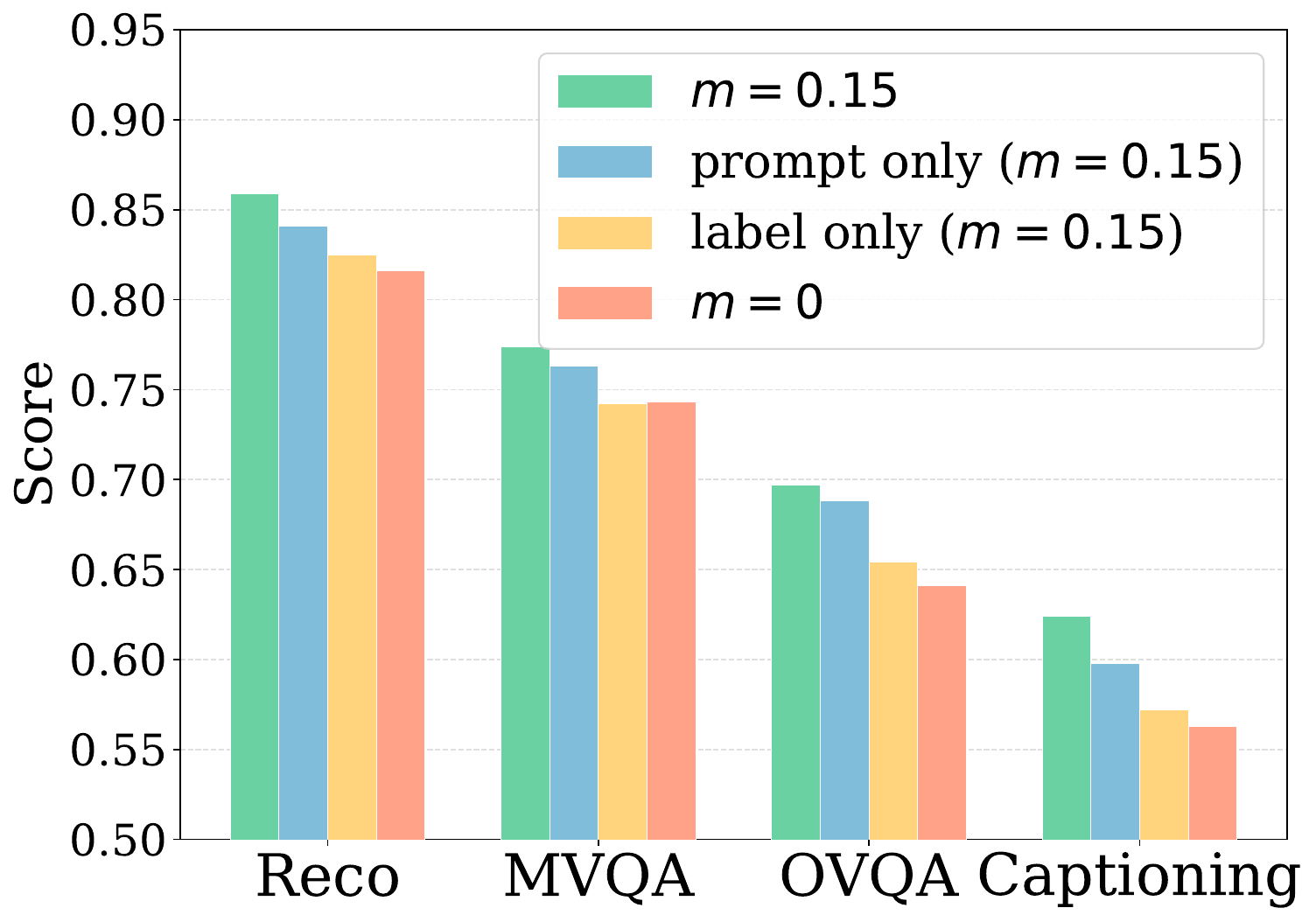}
\captionof{figure}{Ablation study on the MCS setting.}
\label{fig:fig6}
\end{minipage}
\end{figure}

\subsection{Ablation Studies}
We first examine the effectiveness of the two proposed prompt constraints and their corresponding learning objective. The results in Fig.~\ref{fig:fig4} show that both CVM and MCS, together with $\mathcal{L}_{Geo}$, improve performance under both single-concept and multi-concept settings, with larger gains in the multi-concept case. This indicates that the proposed mechanisms are well optimized during training and effectively enhance the semantic quality of the in-context prompts.

We then analyze the impact of the prompt full capacity $L_{\text{max}}$ and the DTR mechanism. One ablation replaces DTR with random pruning at the same ratio. As shown in Fig.~\ref{fig:fig5}, personalization performance generally improves as the prompt length increases, since longer prompts can encode richer visual features. The improvement continues until a threshold. Within this range, the performance gain brought by DTR also becomes more pronounced. Beyond this threshold, however, longer prompts tend to encode redundant features or noise, which leads to performance degradation. In this regime, dynamic pruning becomes even more important. These results indicate that both proper prompt length selection and adaptive pruning are critical for soft prompt-based methods.

For MCS, we design three ablation settings. We apply MCS only to label embeddings or only to full-capacity prompts while keeping the similarity threshold at $m = 0.15$, and we further test the case where $m = 0$, which enforces strict orthogonality. Results in Fig.~\ref{fig:fig6} show that applying MCS only to label embeddings causes a larger performance drop than applying it only to prompts. This suggests that multi-concept confusion mainly arises from overlap in the visual representation space. Nevertheless, applying constraints to both modalities remains important because LVLMs have limited ability to clearly separate them. When $m = 0$, the model overemphasizes separation and weakens the beneficial role of prior knowledge during personalization. This result highlights the importance of selecting an appropriate margin $m$. We provide more comprehensive ablation studies in the Appendix, covering the choice of vision encoder, the selection of feature extraction layers, the coefficients of each loss term, the DTR decision threshold $\tau$, and the CVM capacity $K$.

\subsection{Analyses}
We conduct efficiency analyses of ICPT. With the prompt full capacity set to 20, ICPT achieves 12\% and 18\% lower inference latency on the full test set compared with PLVM and MC-LLaVA, which use fixed 16-token prompts for each concept. This improvement results from using the model’s native vision encoder, together with the lightweight design of ACP and the dynamic pruning enabled by DTR.

We also investigate an open question in LVLM personalization: whether training data volume or diversity is more important. We characterize training data diversity along three dimensions: the visual distribution of reference images per concept, the distribution of the number of concepts per input, and the variety of task types. The results show that a low-volume, high-diversity training setup substantially outperforms its high-volume counterpart. This indicates that exposing the model to complex visual contexts, varied concept combinations, and diverse task formats is far more important than simply increasing the number of collected concepts. As LVLM capabilities advance, effective personalization should move beyond brute-force data memorization and instead serve as a generalizable semantic bridge that aligns concepts with the LVLM’s reasoning in the latent space. These analyses are presented in detail in the Appendix.
\section{Conclusion}
We propose in-context prompt tuning (ICPT), a novel and effective framework that advances the capability of LVLMs to handle complex, multi-concept personalization tasks. By fully shifting the personalization process into the latent space through simulated ICL, our method circumvents the heavy computational burden of inference-time training, allowing users to introduce multiple new concepts on the fly. Central to this capability is the Adaptive Concept Projector (ACP), which efficiently distills reference images into dynamic-length visual prompts and dedicated semantic anchors. Furthermore, to ensure these representations remain robust against background noise and cross-concept entanglement, we design two geometric constraints: Contextual Variation Memory and Margin-constrained Concept Separation. These designs are validated through extensive experiments. We envision that ICPT will provide a strong and practical baseline for future research on LVLM personalization.
\bibliographystyle{splncs04}
\bibliography{main}
\appendix
\section{Personalization Data}
\subsection{Training data}
When constructing the multi-image, multi-concept personalization dataset, we follow three steps: selecting concepts, assigning reference images, and constructing user queries and query images. For the training data, the first source of concepts comes from existing personalization datasets, including MyVLM \cite{myvlm}, Yo'LLaVA \cite{yollava}, and MC-LLaVA \cite{mcllava}. The Yo'LLaVA dataset contains 40 diverse concepts, each represented by 4 to 10 images. MyVLM has 29 object-centric concepts, providing at least 10 images for each concept. MC-LLaVA provides high-quality multi-concept data, covering 50 scenarios with 2 to 4 concepts and a total of 2020 samples. Beyond these datasets, we further collect concepts from films, TV series, animation, and variety shows, as well as public image repositories. We also expand the concept pool through image editing based on these sources to avoid overrepresentation of well-known entities. In total, we obtain 350 concepts covering diverse categories, including characters with different visual styles, animals, objects, buildings, and scenes.
Each concept is assigned a generic label. We ensure that the label does not correlate with the LVLM’s parametric knowledge of the concept, so that the task truly reflects personalization. For example, for an image of Donald Trump, we avoid labels such as <Donald> or <President> and instead use generic identifiers such as <Mike> or <sks1>.

Next, for each concept, we prepare 10 reference images. We impose the following requirements for each set of reference images. Each image must clearly contain part or all of the visual characteristics of the concept so that it can be distinguished from other concepts. Within a set, no two images should be overly similar, and the images should maintain diversity in factors such as entity position, viewpoint, pose, appearance, and background. Among all reference images, 75\% are collected from existing visual materials, while the remaining 25\% are generated based on other reference images of the same concept according to the requirements. During training, the number of reference images sampled for each concept ranges from 1 to 6 and is non-uniformly distributed.

By combining each label with its corresponding reference images, we obtain a complete set of concepts. Based on these concepts, we further construct query images and user queries. In total, we collect 2000 query images, each containing between 0 and 5 concepts. To encourage robust reasoning and reduce text-prior hallucinations, the data distribution is carefully balanced. Single-concept cases account for 35\% of the data, multi-concept cases account for 60\%, and the remaining cases contain zero concepts.
Following prior work, the training data adopts standard dialogue formats. The user queries cover three task types: existence recognition, multiple-choice visual question answering (VQA), and captioning. Existence recognition is the most representative task in LVLM personalization. It requires the model to determine whether a concept appears in the provided query image. A typical user query follows the format “Can you see <sks1> in this photo? Answer with a single word: Yes or No.” Query images that contain zero concepts are also used in this task. These samples naturally serve as negative examples during training, which are important for effective personalization.
Multiple-choice VQA provides the reference images while asking the model about attributes of the query image, together with four answer options labeled A, B, C, and D. The model must directly select the correct option. Captioning requires the model to generate a descriptive caption for the provided query image. After personalization, the model is expected to produce a complete and accurate caption while using the assigned labels to represent the corresponding concepts. All captions follow the MSCOCO style, and each user query includes a textual caption example as guidance. During dataset construction, we use the Gemini3.1-Pro API for text generation and the Nano Banana 2 API for image generation. For each QA pair, we employ four expert annotators to conduct two rounds of review. Problematic questions and incorrect annotations are corrected during this process to ensure the quality of the training data. Only the five highest-quality user queries are retained for each query image, while ensuring diversity across task types. This results in a total of 10000 QA pairs. Among them, 500 correspond to zero-concept cases, 3500 to single-concept cases, 2000 to two-concept cases, 2000 to three-concept cases, 1500 to four-concept cases, and 500 to five-concept cases.  
\subsection{Test data}
The tools and procedures used to construct the test data follow the same pipeline as those used for the training data. When collecting concepts, in addition to MyVLM, Yo'LLaVA, MC-LLaVA, and public image sources, we also include concepts from MMPB \cite{bench}, a benchmark specifically designed for evaluating personalization. In total, we obtain 200 concepts for testing and assign each a generic label. Since our method emphasizes avoiding inference-time training, all concepts in the test set are ensured to be out-of-distribution relative to the training data.
For each concept, we obtain 1 to 6 reference images through collection or synthesis, with the number of images uniformly distributed across concepts. Among them, 85\% are directly collected from existing visual materials and 15\% are synthesized. At test time, all available reference images of a concept are used.
Based on these concepts, we collect 300 query images. Each query image contains between 0 and 6 concepts. We then construct 20 user queries for each query image by combining existing annotations with additional queries generated by the Gemini3.1-Pro API. These queries cover the three task types used in training and a novel open-ended VQA task, which requires the LVLM to answer a question directly based on multimodal reasoning without predefined options. In the open-ended VQA setting, some questions are asked without a query image and require the model to answer solely based on its understanding of the concepts, such as describing the appearance of a concept. Next, we filter the 20 queries for each image and retain 10 high-quality queries to ensure question quality and task diversity with the GPT-5 API. Finally, we manually inspect the resulting queries to remove invalid questions, incorrect gold labels, and overly redundant task types.
Across the four task types, the evaluation set contains 1050 single-concept cases and 2100 multi-concept cases.
\section{Theorem}
\addtocounter{theorem}{-1}
\begin{theorem}[State-Induced Interference Bound] 
If the learned prompt satisfies the orthogonality constraint $\langle \hat{\mathbf{P}}_i^{full}, \hat{\mathbf{d}} \rangle_F = 0$ for all $\hat{\mathbf{d}} \in \mathcal{M}_{ext}$, then the Frobenius inner product (which is equivalent to cosine similarity for Frobenius-normalized matrices) between the prompt and the novel query decomposes as \begin{equation} \langle \hat{\mathbf{P}}_i^{\,full}, \hat{\mathbf{q}} \rangle_F = \langle \hat{\mathbf{P}}_i^{\,full}, \hat{\mathbf{v}}_{id} \rangle_F + \langle \hat{\mathbf{P}}_i^{\,full}, \boldsymbol{\delta}_{state}^{\perp} \rangle_F, \end{equation} and the contribution of state-induced deviation is bounded by \begin{equation} \left| \langle \hat{\mathbf{P}}_i^{\,full}, \boldsymbol{\delta}_{state}^{\perp} \rangle_F \right| \le \left\| \boldsymbol{\delta}_{state}^{\perp} \right\|_F . \end{equation} \end{theorem}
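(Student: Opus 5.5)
The argument is linear algebra in the inner-product space $\mathbb{R}^{L_{max}\times d}$ with the Frobenius inner product, which is finite-dimensional. Three facts carry it: the hypothesis extends from the memory elements to their span, the deviation splits orthogonally, and Cauchy--Schwarz applies to the residual.

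First, we extend the orthogonality hypothesis from $\mathcal{M}_{ext}$ to $\hat{\mathcal{V}}_{ext}=\text{span}(\mathcal{M}_{ext})$. Any $\mathbf{u}\in\hat{\mathcal{V}}_{ext}$ is a finite combination $\mathbf{u}=\sum_k c_k\hat{\mathbf{d}}_k$ with $\hat{\mathbf{d}}_k\in\mathcal{M}_{ext}$, and there are at most $K$ such elements. Bilinearity of $\langle\cdot,\cdot\rangle_F$ then gives $\langle\hat{\mathbf{P}}_i^{full},\mathbf{u}\rangle_F=\sum_k c_k\langle\hat{\mathbf{P}}_i^{full},\hat{\mathbf{d}}_k\rangle_F=0$. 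Since $\hat{\mathcal{V}}_{ext}$ is a finite-dimensional subspace, the orthogonal decomposition $\boldsymbol{\delta}_{state}=\boldsymbol{\delta}_{state}^{\parallel}+\boldsymbol{\delta}_{state}^{\perp}$ stated before the theorem exists and is unique. Its components are the orthogonal projections onto $\hat{\mathcal{V}}_{ext}$ and onto $\hat{\mathcal{V}}_{ext}^{\perp}$. In particular $\langle\hat{\mathbf{P}}_i^{full},\boldsymbol{\delta}_{state}^{\parallel}\rangle_F=0$.

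Second, we write $\hat{\mathbf{q}}=\hat{\mathbf{v}}_{id}+\boldsymbol{\delta}_{state}$, which is just the definition of $\boldsymbol{\delta}_{state}$, and expand by linearity in the second argument:
\begin{equation*}
\langle\hat{\mathbf{P}}_i^{full},\hat{\mathbf{q}}\rangle_F=\langle\hat{\mathbf{P}}_i^{full},\hat{\mathbf{v}}_{id}\rangle_F+\langle\hat{\mathbf{P}}_i^{full},\boldsymbol{\delta}_{state}^{\parallel}\rangle_F+\langle\hat{\mathbf{P}}_i^{full},\boldsymbol{\delta}_{state}^{\perp}\rangle_F .
\end{equation*}
The middle term vanishes by the first step, which yields the claimed decomposition.

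Third, we apply the Cauchy--Schwarz inequality for the Frobenius inner product to get $|\langle\hat{\mathbf{P}}_i^{full},\boldsymbol{\delta}_{state}^{\perp}\rangle_F|\le\|\hat{\mathbf{P}}_i^{full}\|_F\,\|\boldsymbol{\delta}_{state}^{\perp}\|_F$. Because $\hat{\mathbf{P}}_i^{full}=\mathbf{P}_i^{full}/\|\mathbf{P}_i^{full}\|_F$ has unit norm, this is exactly the stated bound. The normalization is well defined provided $\mathbf{P}_i^{full}\neq 0$, which is implicitly assumed whenever $\hat{\mathbf{P}}_i^{full}$ is written.

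None of these steps is a real obstacle. The only care needed is in the first step: the hypothesis constrains only the finitely many stored elements, so we must explicitly pass to their span via bilinearity before discarding $\boldsymbol{\delta}_{state}^{\parallel}$.
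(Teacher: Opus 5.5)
Your proof is correct and follows the same route as the paper's. Both expand $\langle \hat{\mathbf{P}}_i^{full}, \hat{\mathbf{v}}_{id}+\boldsymbol{\delta}_{state}^{\parallel}+\boldsymbol{\delta}_{state}^{\perp}\rangle_F$ by linearity, drop the parallel term by orthogonality, and bound the residual by Cauchy--Schwarz with $\|\hat{\mathbf{P}}_i^{full}\|_F=1$. Your bilinearity step, which carries the hypothesis from the stored elements of $\mathcal{M}_{ext}$ to their span, is only asserted in the paper, so it tightens the argument rather than changing it.
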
 

\begin{proof} The constraint encourages the prompt to suppress components aligned with dominant contextual variation directions: \begin{equation} \langle \hat{\mathbf{P}}_i^{full}, \hat{\mathbf{d}} \rangle_F = 0, \quad \forall\hat{\mathbf{d}} \in \mathcal{M}_{ext}.  \end{equation} 

The Frobenius inner product between the prompt $\hat{\mathbf{P}}_i^{full}$ and query $\hat{\mathbf{q}}$ expands algebraically as: \begin{equation} \small \langle \hat{\mathbf{P}}_i^{full}, \hat{\mathbf{q}} \rangle_F = \langle \hat{\mathbf{P}}_i^{full}, \hat{\mathbf{v}}_{id} + \boldsymbol{\delta}_{state} \rangle_F = \langle \hat{\mathbf{P}}_i^{full}, \hat{\mathbf{v}}_{id} \rangle_F + \langle \hat{\mathbf{P}}_i^{full}, \boldsymbol{\delta}_{state}^{\parallel} \rangle_F + \langle \hat{\mathbf{P}}_i^{full}, \boldsymbol{\delta}_{state}^{\perp} \rangle_F . \end{equation} 

By explicitly constraining the prompt to be mathematically orthogonal to the contextual variation span in Eq.~\ref{eq:ortho_constraint}, the parallel interference term is nullified: $\langle \hat{\mathbf{P}}_i^{full}, \boldsymbol{\delta}_{state}^{\parallel} \rangle_F = 0$. The interaction simplifies to the pure identity match plus the out-of-span interference: $\langle \hat{\mathbf{P}}_i^{full}, \hat{\mathbf{v}}_{id} \rangle_F + \langle \hat{\mathbf{P}}_i^{full}, \boldsymbol{\delta}_{state}^{\perp} \rangle_F$. Applying the Cauchy-Schwarz inequality for the Frobenius inner product, the residual interference error $E$ is bounded by $E = |\langle \hat{\mathbf{P}}_i^{full}, \boldsymbol{\delta}_{state}^{\perp} \rangle_F| \le \|\hat{\mathbf{P}}_i^{full}\|_F \|\boldsymbol{\delta}_{state}^{\perp}\|_F$. Since $\hat{\mathbf{P}}_i^{full}$ is Frobenius-normalized ($\|\hat{\mathbf{P}}_i^{full}\|_F = 1$), the interference error is bounded exactly by $E \le \|\boldsymbol{\delta}_{state}^{\perp}\|_F$. \end{proof} 
\addtocounter{remark}{-1}
\begin{remark}  Theorem~\ref{thm:stateagnostic}'s proof is provided in the Appendix. It establishes that enforcing orthogonality to the subspace $\hat{\mathcal{V}}_{ext}$ attenuates interference aligned with the stored variation directions. As $\mathcal{M}_{ext}$ is continuously updated with diverse, Frobenius-normalized pairwise transitions during training, it provides a progressively richer empirical reference frame for common environment-induced changes. As the memory accumulates more diverse variation directions up to its capacity limit $K$, the learned subspace can better approximate commonly observed contextual shifts, reducing the magnitude of predictable residual components. \end{remark} 
\section{Implementation Details}
We first introduce the used baselines:
\begin{itemize}
    \item \textbf{MyVLM} \cite{myvlm}: MyVLM personalizes a pretrained LVLM with external concept heads that detect the presence of concepts in a set of images using features extracted from its vision encoder. When a concept is detected, a learned concept embedding in the intermediate feature space is injected into the model to condition the LLM decoder. It is designed for multi-image, single-concept settings and cannot be properly extended to multi-concept scenarios.
    \item \textbf{Yo'LLaVA} \cite{yollava}: Yo'LLaVA learns a personalized soft prompt for each concept from a small set of reference images by introducing a new identifier token together with several learnable latent tokens that encode the concept’s visual attributes. It considers both single-image and multi-image settings. In addition, by fusing the tokens assigned to each concept with extended classification head parameters, we adapt it to multi-concept settings, which is a commonly used operation also used in \cite{mcllava}.
    \item \textbf{PVIT} \cite{pvit}: PVIT performs personalized instruction tuning by constructing training samples that combine reference images, query images, and dialogue-style prompts containing concept identifiers. The LVLM is then instruction-tuned on these multimodal conversations so that the model learns to associate the concept labels with their visual identities and generate responses conditioned on the input image and prompt. Since it improves the base LVLM’s ICL ability through instruction tuning, it can be directly applied to multi-image, multi-concept settings. Following the original paper, we use the same training data and training procedure for each LVLM and tune the hyperparameters to achieve the best performance.
    \item \textbf{RAP} \cite{rap}: RAP adopts a retrieval-augmented personalization pipeline consisting of three stages. First, concepts and their associated images or attributes are stored in a key–value database; during inference, a multimodal retriever identifies relevant concepts for the query image; the retrieved concept information is then concatenated with the input prompt and image and fed into the LVLM to generate the final response. It can be directly applied to multi-image, multi-concept settings and updates the concept database online, as in the original architecture.
    \item \textbf{PeKit} \cite{pekit}: PeKit encodes patch-level features from reference images into a persistent memory module $\mathcal{M}$, which are dynamically retrieved by a retrieval module $\mathcal{R}$ and used as visual prompts for personalized inference. As no publicly available source code is provided, we reproduce the method based on the pipeline described in the original paper and ensure that it achieves the reported performance on its original benchmarks. Since it is an ICL-based method, it can be directly applied to multi-image, multi-concept settings.
    \item \textbf{PLVM} \cite{plvm}: PLVM first encodes a reference image of a personalized concept using a pretrained vision encoder and transforms the extracted features into a concept representation composed of a word embedding and a set of context tokens. The resulting concept tokens are inserted as soft prompts. As no publicly available source code is provided, we reproduce the method based on the pipeline described in the original paper and ensure that it achieves the reported performance on its original benchmarks. It considers only single-image, single-concept settings, but the method can be directly extended to multi-concept scenarios. To adapt it to multi-image settings, we adopt the same aggregation strategy as ICPT to combine visual features from multiple reference images of the same concept.
    \item \textbf{MC-LLaVA} \cite{mcllava}: MC-LLaVA first encodes images of multiple user-provided concepts with the vision encoder and projection layer to obtain visual tokens, which are clustered and used to initialize learnable concept tokens in personalized textual prompts. During inference, the model aggregates similarity-based location confidence maps between stored concept features and the test image to construct a personalized visual prompt, which is combined with the textual prompts and input image for the LVLM to generate responses. It is designed for multi-image, multi-concept settings.
\end{itemize}

For baselines such as MyVLM, Yo'LLaVA, and MC-LLaVA, which expand the LVLM vocabulary and therefore support only in-distribution personalization, we construct corresponding training data based on the test concepts so that these methods can operate effectively. All experiments are conducted on the same test data and settings to ensure a fair comparison. 

The four LVLMs used in our experiments are based on the officially released checkpoints, and inference is conducted using the Transformers and HuggingFace libraries. For most hyperparameters in our method, we use the same configuration across all LVLMs, except for the loss coefficients $\alpha$ and $\beta$. On LLaVA-NeXT-7B, we set $\alpha = 0.3$ and $\beta = 0.5$. On LLaVA-NeXT-34B, we set $\alpha = 0.6$ and $\beta = 0.5$. On InternVL3-8B, we set $\alpha = 0.5$ and $\beta = 0.4$. On Qwen3VL-8B, we set $\alpha = 0.6$ and $\beta = 0.4$. For most hyperparameters, we provide ablation studies to examine the sensitivity of ICPT to their choices.

\section{Ablation Studies}
\begin{table*}[t]
\centering
\caption{
Ablation study on the visual feature extraction strategy used as input to ACP.
}
\vspace{-5pt}
\small
\resizebox{\textwidth}{!}{
\begin{tabular}{c||ccc|ccc|ccc|ccc}
\toprule
\multirow{2}{*}{\textbf{Method}}
& \multicolumn{3}{c|}{\textbf{Recognition}}
& \multicolumn{3}{c|}{\textbf{MVQA}}
& \multicolumn{3}{c|}{\textbf{OVQA}}
& \multicolumn{3}{c}{\textbf{Captioning}} \\
\cmidrule(lr){2-4}
\cmidrule(lr){5-7}
\cmidrule(lr){8-10}
\cmidrule(lr){11-13}
& Single & Multi & Weight
& Single & Multi & Weight
& Single & Multi & Weight
& Single & Multi & Weight \\
\midrule

Original 
& 0.873 & 0.859 & 0.868
& 0.850 & 0.774 & 0.791
& 0.716 & 0.697 & 0.702
& 0.713 & 0.624 & 0.654\\
\cmidrule{1-13}
CLIP
& 0.820 & 0.779 & 0.806
& 0.795 & 0.673 & 0.700
& 0.643 & 0.618 & 0.624
& 0.625 & 0.527 & 0.560\\

DINO-v2
& 0.842 & 0.791 & 0.825
& 0.810 & 0.718 & 0.738
& 0.669 & 0.625 & 0.635
& 0.643 & 0.543 & 0.576\\

DINO-v3
& 0.856 & 0.825 & 0.846
& 0.825 & 0.725 & 0.747
& 0.678 & 0.620 & 0.634
& 0.651 & 0.549 & 0.583 \\

\cmidrule{1-13}

Early only
& 0.838 & 0.796 & 0.824
& 0.820 & 0.733 & 0.752
& 0.681 & 0.645 & 0.654
& 0.657 & 0.558 & 0.591 \\

Late only
& 0.832 & 0.787 & 0.817
& 0.830 & 0.748 & 0.766
& 0.667 & 0.628 & 0.637
& 0.641 & 0.532 & 0.568 \\

First both
& 0.859 & 0.823 & 0.847
& 0.825 & 0.756 & 0.771
& 0.705 & 0.674 & 0.681
& 0.685 & 0.583 & 0.617\\

Fifth both
& 0.865 & 0.831 & 0.854
& 0.835 & 0.767 & 0.782
& 0.697 & 0.669 & 0.676
& 0.702 & 0.608 & 0.639\\

\bottomrule
\end{tabular}}
\label{app:ab1}
\end{table*}
\paragraph{Adaptive Concept Projector (ACP).} For the ACP, in addition to designing strategies that optimize the transformation from visual features to in-context prompts, it is also crucial to ensure that the input features themselves are of high quality. Some baselines, such as MC-LLaVA, introduce auxiliary modules like GroundedSAM \cite{grounded} to generate masks for the main characters and improve feature quality. PLVM instead employs an additional vision encoder, though the reference images used in its setting are relatively simple. In ICPT, we take a different approach by directly using the model’s own vision encoder and fusing cross-layer representations. This design allows the model to capture cross-image and fine-grained visual information without introducing additional heavy modules. To investigate this component, we replace the original visual feature extractor with CLIP, DINO-v2 \cite{dinov2}, and DINO-v3 \cite{dinov3}. We also perform ablations on the feature extraction layers while keeping the vision encoder unchanged, replacing the default third and third-to-last layers with either one of them alone, or with the first and fifth layers from both ends. The results are reported in Table~\ref{app:ab1}. We observe that replacing the LVLM’s native vision encoder with other pretrained vision encoders consistently leads to performance degradation when extracting visual features for concepts. This is because the vision encoder and the LLM in an LVLM are jointly trained during pretraining, which partially aligns their semantic representations. As a result, embeddings produced by the native vision encoder are better aligned with the LVLM in the latent space, making them more suitable for constructing in-context prompts. At the same time, the two proposed constraints are more effective when applied to prompts that are better aligned with the LLM decoder's latent space. This finding suggests that personalization methods should prioritize designs that leverage the model’s native components. In addition, our multi-layer feature extraction strategy proves effective. By capturing visual representations at different levels of granularity, it allows the ACP to learn how to adaptively utilize visual embeddings. This design compensates for the absence of explicit masking to highlight key regions. When selecting feature extraction layers, it is necessary to balance the preservation of visual details in the embeddings with the degree of feature fusion. Therefore, choosing layers with indices between 2 and 5 from both the beginning and the end of the encoder is a suitable strategy.

\paragraph{Loss coefficients.} We further conduct an ablation study on the loss coefficients $\alpha$ and $\beta$ using LLaVA-NeXT-7B and Qwen3VL-8B. The results are reported in Table~\ref{app:ab2}. Our observations are as follows.
(1) Personalization performance across different tasks remains relatively stable as these two coefficients vary, indicating that ICPT is robust to the choice of these model-specific hyperparameters. At the same time, LVLMs with weaker general capabilities tend to be more sensitive to these coefficients.
(2) The coefficient $\alpha$ has a larger impact on personalization performance than $\beta$, especially in multi-task settings. This highlights the important role of the two geometric constraints and their derived optimization objective in ICPT.
(3) Moderate coefficient values tend to produce the most balanced performance across tasks. For example, on LLaVA-NeXT-7B, reducing $\alpha$ from 0.5 to 0.1 consistently lowers the weighted scores across tasks, while moderate adjustments (e.g., $\beta = 0.7$) can slightly improve certain tasks such as MVQA. A similar pattern appears for Qwen3VL-8B, where moderate $\beta$ values maintain strong performance while avoiding noticeable degradation. This suggests that the loss terms contribute complementary regularization effects, and excessively weakening either constraint slightly harms the balance among tasks.

\begin{table*}[t]
\centering
\caption{
Ablation study on the coefficients of $\mathcal{L}_{Geo}$ and $\mathcal{L}_{Spa}$ in the overall loss.
}
\vspace{-5pt}
\small
\resizebox{\textwidth}{!}{
\begin{tabular}{c|c||ccc|ccc|ccc|ccc}
\toprule
\multirow{2}{*}{\textbf{LVLM}} & 
\multirow{2}{*}{\textbf{Method}}
& \multicolumn{3}{c|}{\textbf{Recognition}}
& \multicolumn{3}{c|}{\textbf{MVQA}}
& \multicolumn{3}{c|}{\textbf{OVQA}}
& \multicolumn{3}{c}{\textbf{Captioning}} \\
\cmidrule(lr){3-5}
\cmidrule(lr){6-8}
\cmidrule(lr){9-11}
\cmidrule(lr){12-14}
& 
& Single & Multi & Weight
& Single & Multi & Weight
& Single & Multi & Weight
& Single & Multi & Weight \\
\midrule

\multirow{6}{*}{LLaVA-NeXT-7B}
& Original  & 0.873 & 0.859 & 0.868 & 0.850 & 0.774 & 0.791 & 0.716 & 0.697 & 0.702 & 0.713 & 0.624 & 0.654 \\
\cmidrule{2-14}
& $\alpha=0.1$ & 0.865 & 0.827 & 0.852  & 0.840 & 0.737 & 0.760 & 0.702 & 0.627  & 0.645 & 0.687&  0.565 & 0.606\\
& $\alpha=0.5$ & 0.875 & 0.847 &0.866 & 0.835 & 0.752 & 0.770& 0.709 & 0.703 & 0.704 & 0.692 & 0.612 & 0.639 \\
\cmidrule{2-14}
& $\beta=0.3$ & 0.857 & 0.847 & 0.854 & 0.835 &0.746 & 0.766 & 0.712& 0.686 &0.692 & 0.719 &  0.614& 0.649\\
& $\beta=0.7$ & 0.864 & 0.852 &0.860 & 0.845& 0.782 & 0.796& 0.694 & 0.683 &0.686 & 0.698 & 0.616 & 0.643\\
\midrule

\multirow{6}{*}{Qwen3VL-8B}
& Original & 0.963 & 0.924 & 0.950 & 0.920 & 0.867 & 0.879 & 0.776 & 0.724 & 0.736 & 0.835 & 0.821 & 0.826\\
\cmidrule{2-14}
& $\alpha=0.4$  & 0.958& 0.898 & 0.938& 0.905 & 0.836 & 0.851 &0.759 & 0.679& 0.698 & 0.823& 0.799& 0.807\\
& $\alpha=0.8$  & 0.947 & 0.886 & 0.927& 0.900 & 0.853 &0.863 & 0.764& 0.687& 0.705 & 0.816& 0.782& 0.793\\
\cmidrule{2-14}
& $\beta=0.2$ & 0.958& 0.908 & 0.941&0.915&0.858 & 0.871& 0.783 & 0.710 & 0.727 &0.843& 0.836 & 0.838\\
& $\beta=0.6$ & 0.967 & 0.913 &0.949 & 0.925 & 0.849 & 0.866& 0.769&0.694 & 0.712 & 0.827 & 0.813& 0.818\\
\bottomrule
\end{tabular}}
\label{app:ab2}
\end{table*}
\paragraph{Dynamic Token Router (DTR) decision threshold.} We investigate the effect of the DTR decision threshold $\tau$ on the performance across the four personalization tasks. As illustrated in Fig. \ref{app:fig4}, varying $\tau$ from 0.3 to 0.7 yields an inverted U-shaped trend, with performance peaking at $\tau=0.5$ for all tasks. When the threshold is lowered to 0.3 or 0.4, the performance exhibits a slight decline. This indicates that retaining an excessive number of visual tokens allows redundant background features to bypass the router, which introduces noise during the LVLM's inference with in-context prompts and increases cross-concept interference. Conversely, increasing the threshold to 0.6 or 0.7 results in a performance drop across all tasks. This suggests that a strict threshold prunes the fine-grained visual evidence necessary for accurate reasoning. Therefore, setting $\tau=0.5$ provides an appropriate information bottleneck, filtering out visual distractions while preserving essential semantic features.

\paragraph{Contextual Variation Memory (CVM) capacity.} We further examine the effect of the CVM capacity $K$ under varying training batch sizes (BS) on the single- and multi-concept OVQA tasks. As shown in Fig. \ref{app:fig5}, the optimal capacity $K$ is coupled with the batch size. For smaller batch sizes (BS=4 and BS=8), performance peaks early at moderate capacities ($K=64$ or $K=128$) and noticeably declines as $K$ increases to 256. In contrast, for a larger batch size (BS=16), performance steadily improves as $K$ increases up to 256. This phenomenon highlights the underlying trade-off between variation subspace diversity and representation staleness. Because the CVM relies on dynamically updated network representations, filling a large memory queue under a small batch size requires a significantly higher number of optimization steps. This forces the model to regularize against outdated features computed from past iterations, providing misaligned geometric constraints that hinder convergence. A larger batch size mitigates this staleness by updating the queue more rapidly, enabling the model to safely benefit from a richer variation subspace. Consequently, the CVM capacity should be scaled in proportion to the batch size to maintain temporal consistency.
\begin{figure}[t]
\centering
\begin{minipage}{0.48\textwidth}
\centering
\includegraphics[width=\linewidth]{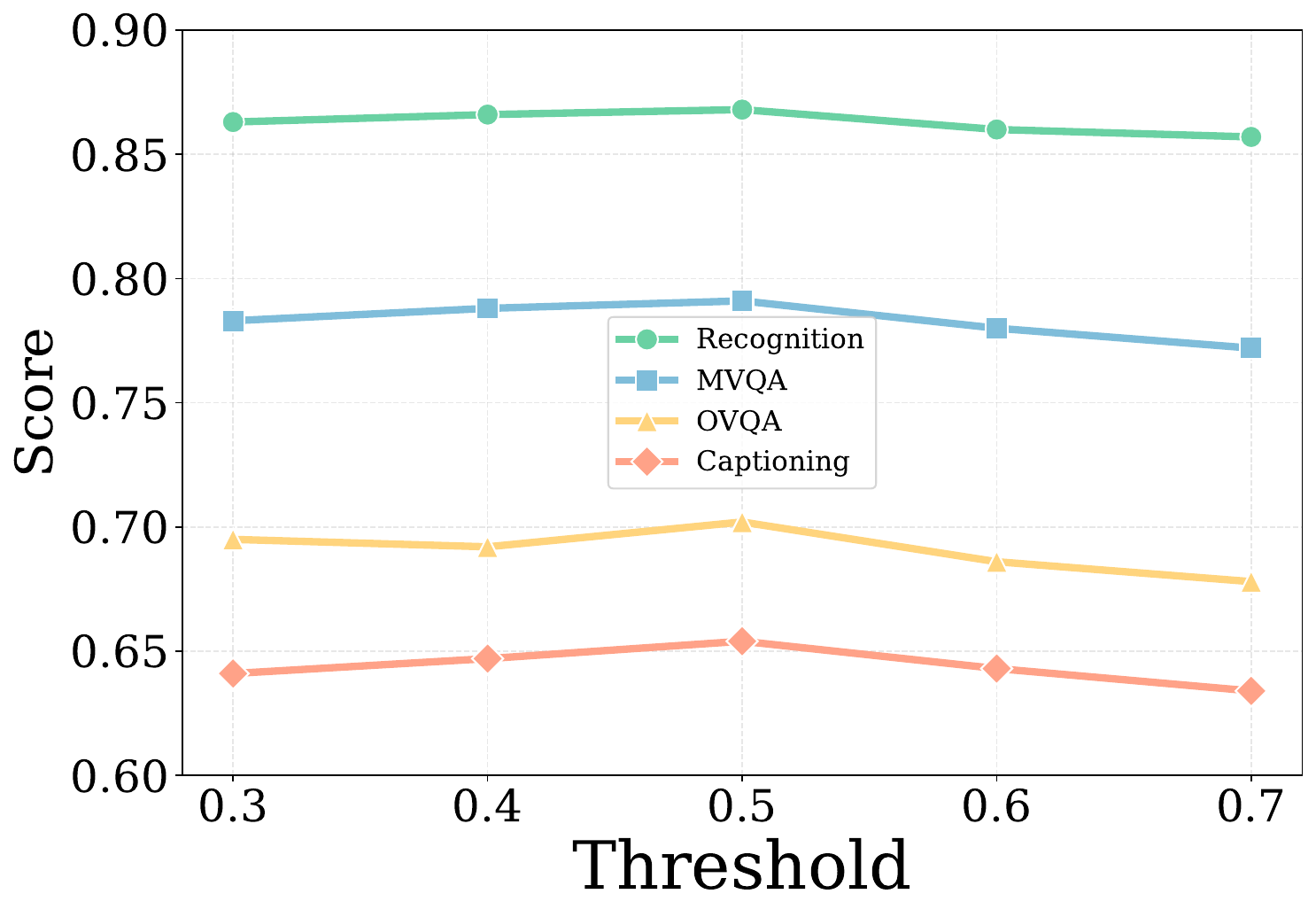}
\captionof{figure}{Ablation study on DTR decision threshold $\tau$.}
\label{app:fig4}
\end{minipage}
\hfill
\begin{minipage}{0.48\textwidth}
\centering
\includegraphics[width=\linewidth]{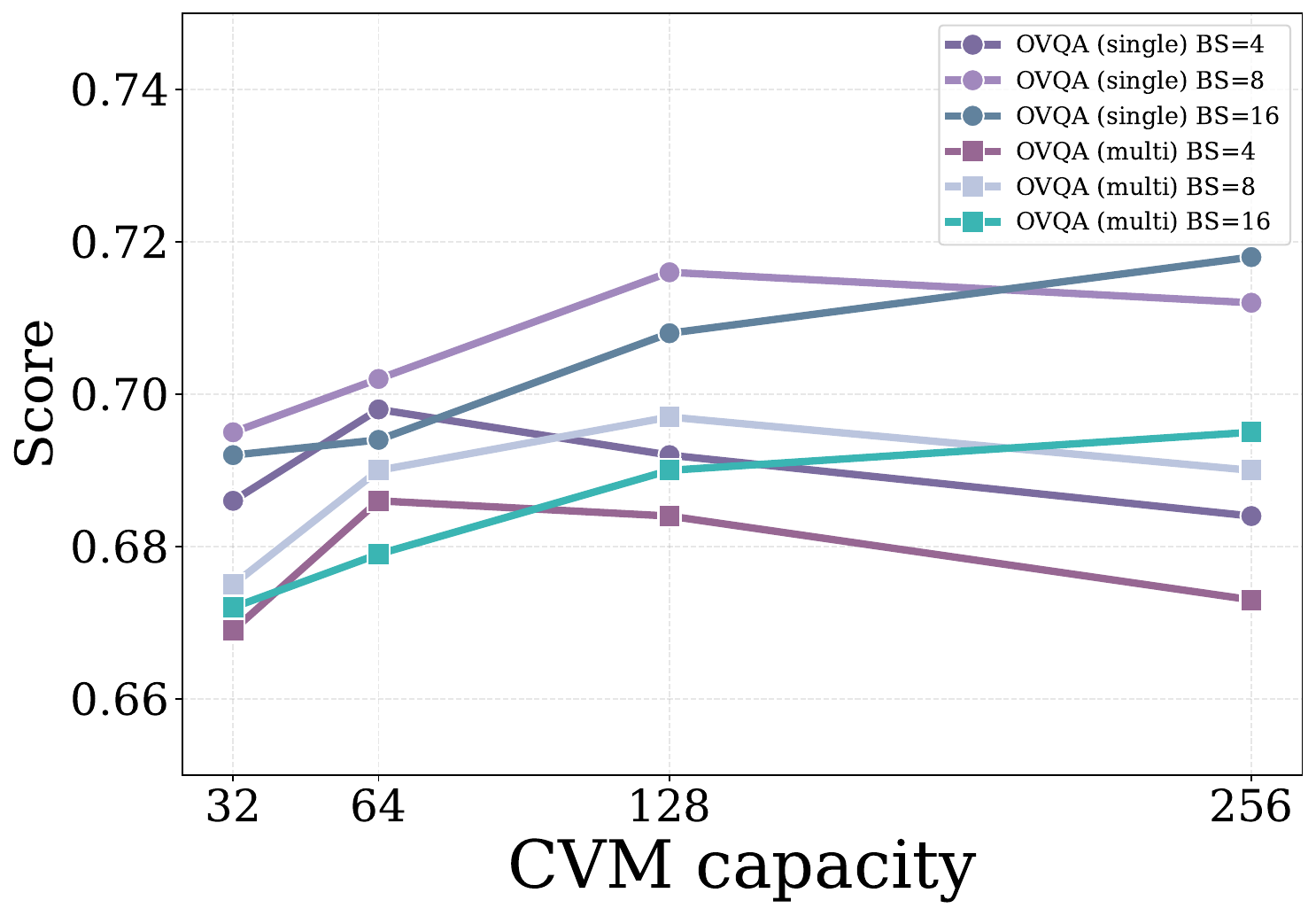}
\captionof{figure}{Ablation study on CVM capacity $K$.}
\label{app:fig5}
\end{minipage}
\end{figure}
\section{Analyses}
\subsection{Additional qualitative examples and failure case study}
We provide two additional qualitative examples of ICPT in Fig. \ref{app:fig1} as a supplement to the examples shown in the main text. The upper example corresponds to data constructed from synthetic images in our test set. The lower example follows the same setting as the examples in the main text, where all images are drawn from real-world visual data. These tasks require strict reliance on visual reasoning over the query image together with an understanding of the personalized concepts. Each case involves multiple concepts, and each concept is associated with a varying number of reference images.
The results show that, with ICPT, the model is able to robustly integrate user-specific concepts into general multimodal reasoning. This capability enables the LVLM to maintain strong reasoning performance while accurately grounding personalized concepts in the visual context, thereby supporting more reliable and flexible personalized multimodal applications.

However, we also observe several representative failure cases of ICPT in the test data, which mainly fall into two categories.
First, in multi-concept settings, some concepts exhibit high visual similarity. In such cases, ICPT may confuse these similar concepts and incorrectly associate the query with the wrong concept label. This behavior suggests that although the geometric constraints help separate concepts in the representation space, visually similar identities can still lead to ambiguity when multiple personalized concepts coexist.
Second, failure cases also arise when the user query contains too many concept labels, or when the generated output needs to include many concept labels. In ICPT, the prompt template explicitly links each textual label with its corresponding word embedding and learned prompts directly. When the number of labels becomes large, this explicit association may make it difficult for the model to correctly interpret or generate the appropriate label, even though the model may still capture the underlying visual features of the corresponding concept.
\begin{figure}[t]
  \centering
  \includegraphics[width=1\linewidth]{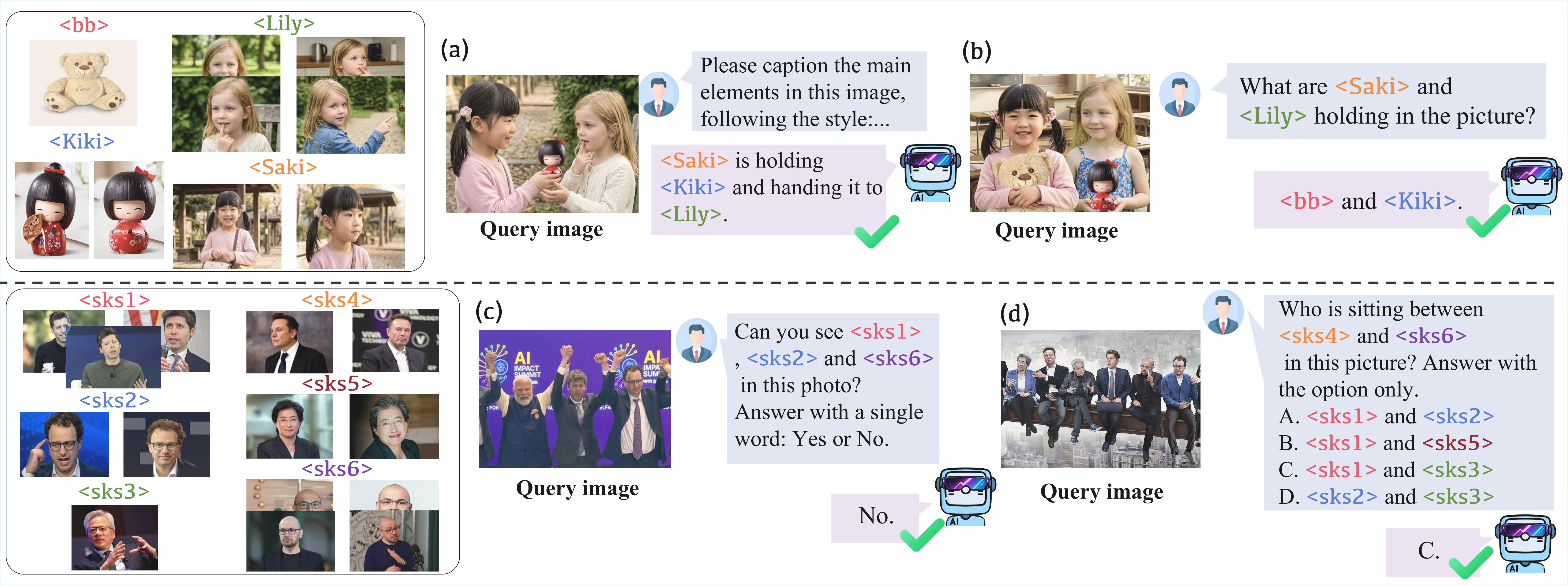}
  \caption{Two additional qualitative examples of ICPT on four personalization tasks.}
  \label{app:fig1}
\end{figure}
\subsection{Efficiency Analyses}
We analyze the efficiency of ICPT from the perspectives of training cost and inference latency. During the training phase, ICPT's lightweight architectural design requires updating only a small number of parameters within the ACP, while the LVLM backbone remains entirely frozen. What's more, the CVM dynamically reuses representations already computed during the ACP's forward pass to populate its queue, completely avoiding the need for redundant network passes. Crucially, ICPT eliminates the need for inference-time training. When encountering novel concepts, vocabulary-expansion methods like MC-LLaVA require separate, computationally expensive fine-tuning steps for each addition. In contrast, ICPT seamlessly extracts and projects new concepts on the fly via a single forward pass. Consequently, as the personalized system is continuously deployed and tasked with acquiring an ever-growing number of concepts, the cumulative savings in training costs become increasingly substantial.

\begin{figure}[t]
\centering
\begin{minipage}{0.48\textwidth}
\centering
\includegraphics[width=\linewidth]{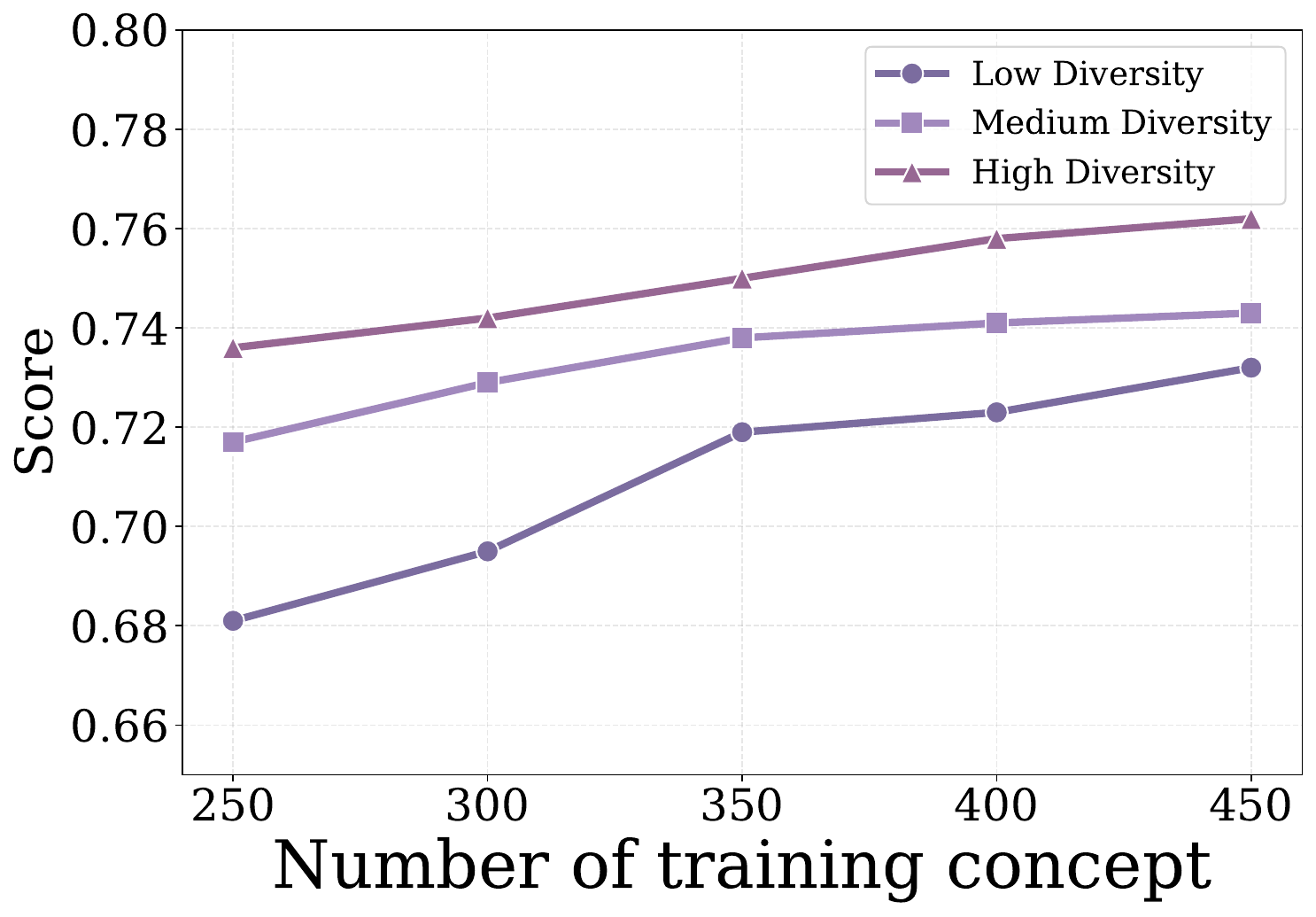}
\captionof{figure}{ICPT's weighted average performance across the four task types under different training data recipes.}
\label{app:fig6}
\end{minipage}
\hfill
\begin{minipage}{0.48\textwidth}
\centering
\captionof{table}{Comparison of the total inference latency of different methods on all test cases and on cases containing four or more concepts.}
\label{app:tab1}
\vspace{3pt}
\begin{tabular}{c c c}
\toprule
\multirow{2}{*}{\textbf{Method}} & \multicolumn{2}{c}{\textbf{Latency}(s)}\\
 & All & $\ge$ 4-concept \\
\midrule
ICL  & 12285.48 & 4631.26\\
PLVM & 2689.95 & 1150.76\\
MC-LLaVA & 2886.78 & 1209.77\\
ICPT & 2367.16 & 943.62\\
\bottomrule
\end{tabular}
\end{minipage}
\end{figure}

Regarding inference efficiency, ICPT is explicitly designed to minimize system latency. This advantage stems directly from two core architectural choices. First, rather than relying on computationally heavy external vision encoders or auxiliary mask generators, ICPT directly leverages hierarchical multi-scale features from the LVLM's built-in vision encoder, thereby streamlining the visual extraction pipeline. Second, the overall decoding latency of an LVLM is primarily bottlenecked by the sequence length of the input prompt. The DTR effectively mitigates this bottleneck by adaptively evaluating the intrinsic visual complexity of each concept and softly pruning redundant visual tokens. This reduces KV-cache memory consumption and attention computation, ensuring rapid response times even in highly complex multi-image and multi-concept scenarios. As reported in Table \ref{app:tab1}, these architectural advantages translate into substantial empirical latency reductions, and the efficiency gap widens as scenarios become more complex with longer input token sequences.

\subsection{Volume vs Diversity in Personalization Training}
In the field of LVLM personalization, determining the optimal training data recipe remains an open but important problem. This is primarily because collecting high-quality, user-specific reference images across varying natural conditions and manually annotating complex multi-concept reasoning interactions are highly labor-intensive and resource-demanding processes. Consequently, it is crucial to understand whether personalization models benefit more from simply scaling up the absolute number of unique training concepts (data volume) or from exposure to highly varied contextual conditions (data diversity).

To systematically evaluate this, we design a controlled scaling experiment. We scale the training data volume, represented by the number of unique training concepts, from 250 to 450. The numbers of query images and QA pairs constructed thereafter are increased proportionally. Concurrently, we categorize data diversity into three distinct regimes-Low, Medium, and High-based on three explicit dimensions: the visual variance among reference images of the same concept, the distribution of the number of concepts per input, and the variety of reasoning tasks. The Low Diversity regime restricts the training data to visually homogeneous reference images, strictly single-concept inputs, and uniform task types. The Medium Diversity regime introduces moderate visual variance and a partial mix of single- and multi-concept inputs. The High Diversity setting corresponds exactly to our current training data construction strategy.

As illustrated in Fig. \ref{app:fig6}, the performance trends across the three regimes reveal a clear hierarchy: High > Medium > Low Diversity across all volume scales. While increasing the data volume generally yields performance improvements, the Low and Medium Diversity curves exhibit clear signs of diminishing returns, plateauing noticeably after reaching 350 concepts. This indicates that passively memorizing an expanding vocabulary of isolated, homogeneous concepts fails to generalize to complex, unseen scenarios. In stark contrast, the High Diversity curve maintains a consistent upward trajectory. Most notably, the high-diversity model trained on merely 250 concepts outperforms the low-diversity model trained on the maximum volume of 450 concepts. This empirically demonstrates that exposing the model to complex visual contexts, varied concept combinations, and diverse task formats is fundamentally more critical than simply expanding the training concept vocabulary. As the foundational capabilities of LVLMs continue to advance, an ideal personalization method should transcend the memorization of massive data. It must act as a robust semantic bridge that distills disentangled concepts from complex real-world inputs and seamlessly aligns them with the model's inherent reasoning capabilities.
%
%

\end{document}